\newcommand{\eqdef}{\stackrel{\mbox{\tiny def}}{=}}
\newtheoremstyle{thmstyle}
  {5pt} 
  {\topsep} 
  {} 
  {} 
  {\bfseries} 
  {.} 
  {.5em} 
  {} 
\theoremstyle{thmstyle}
\newtheorem{theorem}{Theorem}[section]
\newtheorem{definition}[theorem]{Definition}
\newtheorem{proposition}[theorem]{Proposition}
\newcommand{\beq}{\begin{equation}}
\newcommand{\eeq}{\end{equation}}
\newcommand{\beqa}{\begin{eqnarray}}
\newcommand{\eeqa}{\end{eqnarray}}
\newcommand{\beqan}{\begin{eqnarray*}}
\newcommand{\eeqan}{\end{eqnarray*}}
\newcommand{\statespace}{\mathcal{X}}
\newcommand{\actionspace}{\mathcal{A}}
\newcommand{\action}{a}
\newcommand{\ctrace}{C-trace}
\newcommand{\alpharetrace}{$\alpha$-Retrace}
\newcommand{\alphatreebackup}{$\alpha$-TreeBackup}
\newcommand{\alpharetraceone}{$1$-Retrace}
\newcommand{\alpharetraceContract}{C}
\begin{document}

%

%

\twocolumn[

\aistatstitle{Adaptive Trade-Offs in Off-Policy Learning}

\aistatsauthor{Mark Rowland* \And Will Dabney* \And R\'emi Munos}

\aistatsaddress{DeepMind \And DeepMind \And DeepMind} ]

\begin{abstract}
  A great variety of off-policy learning algorithms exist in the literature, and new breakthroughs in this area continue to be made, improving theoretical understanding and yielding state-of-the-art reinforcement learning algorithms. In this paper, we take a unifying view of this space of algorithms, and consider their trade-offs of three fundamental quantities: \emph{update variance}, \emph{fixed-point bias}, and \emph{contraction rate}. This leads to new perspectives on existing methods, and also naturally yields novel algorithms for off-policy evaluation and control. We develop one such algorithm, \ctrace, demonstrating that it is able to more efficiently make these trade-offs than existing methods in use, and that it can be scaled to yield state-of-the-art performance in large-scale environments.
\end{abstract}

\section{Introduction}\label{sec:intro}

Off-policy learning is crucial to modern reinforcement learning, allowing agents to learn from memorised data, demonstrations, and exploratory behaviour \citep{szepesvari2010algorithms,sutton2018reinforcement}. As such, it is a long-studied problem, with a variety of well-understood associated algorithms; see \citep{precup2000eligibility,kakade2002approximately,dudik2014doubly,thomas2016data,munos2016safe,mahmood2017multi,farajtabar2018more} for a representative selection of publications.

However, this paper is motivated by the observation that in spite of this theoretical progress, several state-of-the-art value-based reinforcement learning agents (notably Rainbow \citep{hessel2018rainbow} and R2D2 \citep{kapturowski2018recurrent}) eschew these off-policy algorithms, attaining better performance by using \emph{uncorrected} returns, which do not account for the fact that data is generated off-policy. Further research has corroborated this observation \citep{hernandez2019understanding}. This raises two central research questions: (i) How can we understand the strong performance of uncorrected returns? (ii) Can we distil these advantages, and combine them with existing off-policy algorithms to improve their performance?

One of the principal contributions of this paper is to show that the performance of all off-policy evaluation algorithms can be decomposed into three fundamental quantities: \emph{contraction rate}, \emph{fixed-point bias}, and \emph{variance}; see Figure~\ref{fig:first-tradeoff} for a preliminary illustration. 
Intuitively, \emph{fixed-point bias} describes the error of an algorithm in the limit of infinite data, \emph{contraction rate} describes the speed at which an algorithm approaches its infinite-data limit, and \emph{variance} describes to what extent randomly observed data can perturb the algorithm.

This decomposition yields an interpretation of the empirical success of uncorrected returns, and an answer to question (i) above; namely, that they are efficiently making a trade-off between fixed-point bias and the other fundamental quantities. Further, this suggests an answer to question (ii) --- that we may be able to improve existing off-policy algorithms by incorporating a means of making such a trade-off. This leads us to the development of \emph{C-trace}, a new off-policy algorithm that achieves strong empirical performance in several large-scale environments.

We develop the trade-off framework mentioned above in Section \ref{sec:contractionbiasvariance}, proving the existence of the three fundamental quantities described above, and showing that all off-policy algorithms necessarily make an implicit trade-off between these quantities. We then use this framework to develop new off-policy learning algorithms, \alpharetrace\ and C-trace, in Section \ref{sec:retrace-alpha}, and study its contraction and convergence properties. Finally, we demonstrate their empirical effectiveness in tabular domains and when applied to two deep reinforcement learning agents, DQN \citep{mnih2015human} and R2D2 \citep{kapturowski2018recurrent}, in Section \ref{sec:large-experiments}.

\begin{figure*}
	\centering
	\includegraphics[keepaspectratio,width=.9\textwidth]{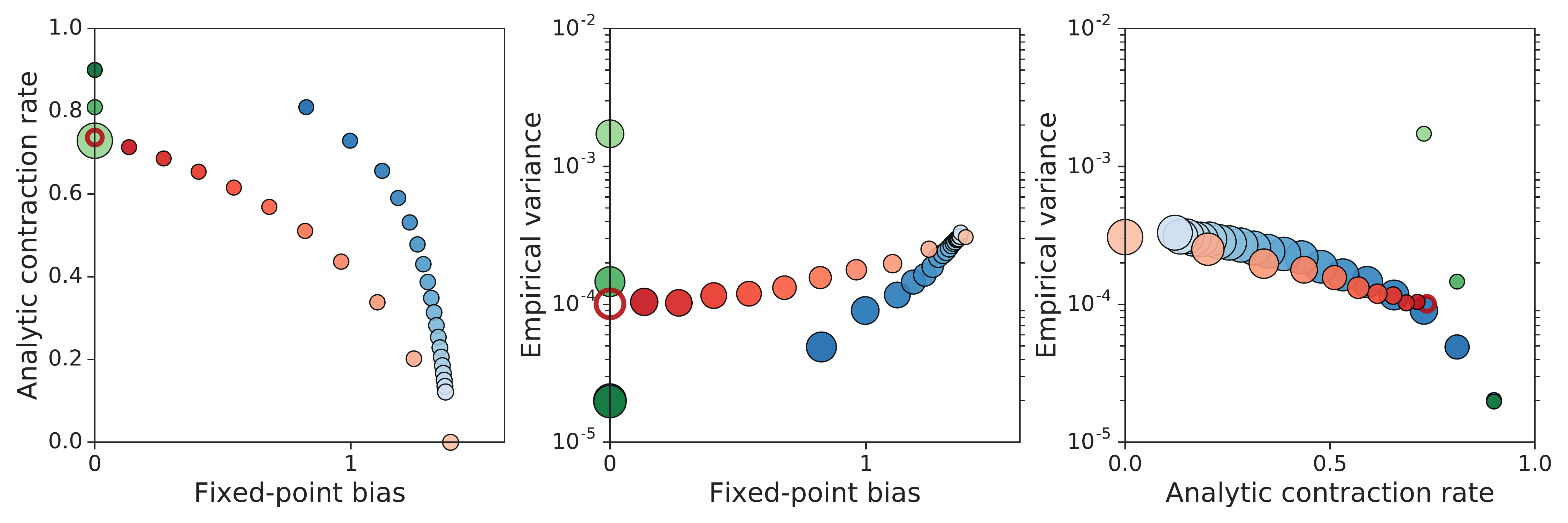}
	\caption{Trade-offs made by $n$-step uncorrected returns (dark blue [$n=1$] through to light blue [$n=20$]), $n$-step importance corrected returns (dark green [$n=1$] through to light green [$n=3$]), Retrace (red open circle). Also pictured is the new method \alpharetrace\ (dark red [$\alpha = 1$] through to light red [$\alpha=0$]), introduced in Section \ref{sec:retrace-alpha}. All quantities are calculated for a fixed evaluation problem in a small, randomly generated MDP; see Appendix Section \ref{sec:envs} for further environment details. In each plot, the magnitude of the points illustrates the relative scale of the third trade-off variable.}
	\label{fig:first-tradeoff}
\end{figure*}

\subsection{Notation and preliminary definitions}

Throughout, we consider a Markov decision process (MDP) with finite state space $\statespace$, finite action space $\actionspace$, discount factor $\gamma \in [0, 1)$, transition kernel $P: \statespace \times \actionspace \rightarrow \mathscr{P}(\statespace)$, reward distributions $\mathcal{R}: \statespace \times \actionspace \rightarrow \mathscr{P}(\mathbb{R})$, and some initial state distribution $\nu_0 \in \mathscr{P}(\statespace)$. Given a Markov policy $\pi : \statespace \rightarrow \mathscr{P}(\mathcal{A})$, we write $(X_t, A_t, R_t)_{t\geq0}$ for the process describing the sequence of states visited, actions taken, and rewards received by an agent acting in the MDP according to $\pi$, so that $R_t |X_t,A_t \sim \mathcal{R}(X_t, A_t)$ for all $t \geq 0$. Additionally, we write $r(x, a)$ for the expected immediate reward received after taking action $a$ in state $x$. Given a policy $\pi$, the task of \emph{evaluation} is to learn the function $Q^\pi(x, a) = \mathbb{E}_\pi\left\lbrack \sum_{t=0}^\infty \gamma^t R_t | X_0 = x, A_0 = a\right\rbrack$, where $\mathbb{E}_\pi$ denotes expectation with respect to the distribution over trajectories induced by $\pi$. The task of \emph{control} is to identify the Markov policy $\pi^*$ maximising the quantity $\mathbb{E}\left\lbrack Q^\pi(X_0, A_0) \right\rbrack$, where $A_0 \sim \pi(\cdot | X_0)$, and $X_0 \sim \nu_0$. We also define the one-step evaluation operator $T^\mu : \mathbb{R}^{\statespace \times \actionspace} \rightarrow \mathbb{R}^{\statespace \times \actionspace}$ associated with a Markov policy $\mu : \statespace \rightarrow \mathscr{P}(\actionspace)$ by
\begin{align}
    &(T^\mu Q) (x, a) = \\
    &r(x, a) + \gamma \sum_{x^\prime \in \statespace\, , a^\prime \in \actionspace} P(x^\prime | x, a) \mu(a^\prime | x^\prime) Q(x^\prime, a^\prime) \nonumber \, ,
\end{align}
for all $Q \in \mathbb{R}^{\statespace\times\actionspace}$, and $(x, a) \in \statespace\times\actionspace$.

We now briefly give formal definitions of the key concepts we seek to analyse in this paper.

\begin{definition}
    An \textbf{evaluation update rule} for evaluating a policy $\pi$ under a behaviour policy $\mu$ is a function $\hat{T} : \mathbb{R}^{\statespace \times \actionspace} \times (\statespace \times \actionspace \times \mathbb{R})^* \rightarrow \mathbb{R}$ that takes as input a value function estimate $Q$ and a trajectory $(x_t, a_t, r_t)_{t\geq0}$ given by following $\mu$, and outputs an update for $Q(x_0, a_0)$. There is an associated \textbf{evaluation operator} $T: \mathbb{R}^{\statespace \times \actionspace} \rightarrow \mathbb{R}^{\statespace \times \actionspace}$, given by
    \begin{align*}
        (TQ) (x, a)\!\! =\!\! \mathbb{E}_\mu\!\left\lbrack \hat{T}(Q, (X_t, A_t, R_t)_{t=0}^\infty) \middle| X_0 = x, A_0 = a \right\rbrack \, ,
    \end{align*}
    for all $Q \in \mathbb{R}^{\statespace\times\action}$ and $(x, a) \in \statespace\times\actionspace$.
\end{definition}
\begin{definition}
    The \textbf{contraction rate} of an operator $T : \mathbb{R}^{\statespace \times \actionspace} \rightarrow \mathbb{R}^{\statespace \times \actionspace}$ is given by
    \begin{align*}
        \Gamma = \sup_{\substack{Q, Q^\prime \in \mathbb{R}^{\statespace \times \actionspace} \\ Q \not= Q^\prime}} \|TQ - TQ^\prime\|_\infty / \|Q - Q^\prime\|_\infty \, ,
    \end{align*}
    An operator is said to be \emph{contractive} if $\Gamma < 1$. We can also consider state-action contraction rates via the quantities $\sup_{Q \not= Q^\prime} |(TQ)(x, a) - (TQ^\prime)(x, a)|/\|Q - Q^\prime\|_\infty$. 
\end{definition}
\begin{definition}
    For a contractive operator $T$ targeting a policy $\pi$, the \textbf{fixed-point bias} of $T$ is given by $\|Q^\pi - \hat{Q}^\pi\|_2$, where $\hat{Q}^\pi$ is the unique fixed point of $T$ (guaranteed to exist by the contractivity of $T$).
\end{definition}
\begin{definition}
    The \textbf{variance} of an update rule $\hat{T}$ stochastically approximating an operator $T$ at approximate value function $Q$ and an initial state-action distribution $\nu \in \mathscr{P}(\statespace\times\actionspace)$ is $\mathbb{E}_{(X_0, A_0) \sim \nu}\!\left\lbrack \mathbb{E}_\mu\!\left\lbrack \|\hat{T}(Q, (X_t, A_t, R_t)_{t=0}^\infty)\! -\! TQ \|_2^2 \middle| X_0, A_0 \right\rbrack \right\rbrack$.
\end{definition}
\section{Contraction, bias, and variance}\label{sec:contractionbiasvariance}
We begin with two motivating examples from recent research in off-policy evaluation methods, illustrating examples of the types of trade-offs we seek to describe in this paper.

\textbf{$n$-step uncorrected returns.} Recently proposed agents such as Rainbow \citep{hessel2018rainbow} and R2D2 \citep{kapturowski2018recurrent} have made use of the \emph{uncorrected }$n$\emph{-step return} in constructing off-policy learning algorithms. Consistent with these results, \citet{hernandez2019understanding} observed that these uncorrected updates frequently outperformed off-policy corrections. Given an estimate $\hat{Q}$ of the action-value function $Q^\pi$, the $n$-step uncorrected target for $\hat{Q}(x_0, a_0)$, given a trajectory $(x_0, a_0, r_0, x_1, a_1, r_1,\ldots, x_n)$ of experience generated according to behaviour policy $\mu$, is given by
\begin{align}
    \sum_{s=0}^{n-1} \gamma^{s} r_{s} + \gamma^n \mathbb{E}_{A \sim \pi(\cdot|x_{n})}\left\lbrack \hat{Q}(x_{n}, A)\right\rbrack \, .
\end{align}
The adjective \emph{uncorrected} contrasts this update target against the $n$\emph{-step importance-weighted return} target, which takes the following form:
\begin{align}
    \sum_{s=0}^{n-1} \rho_{1:s} \gamma^s r_s + \rho_{1:n-1} \gamma^n \mathbb{E}_{A \sim \pi(\cdot|x_{n})}\!\!\left\lbrack \hat{Q}(x_{n}, A)\right\rbrack \, ,
\end{align}
where we write $\rho_t = \pi(a_t|x_t)/\mu(a_t|x_t)$, and $\rho_{s:t} = \prod_{u=s}^t \rho_u$ for each $1\leq s \leq t$. 
Empirically, the former has been observed to work very well in these recent works, whilst the latter is often too unstable to be used; this fact is often attributed to the \emph{high variance} of the importance-weighted update, with the uncorrected update having relatively low variance by comparison. We also observe that the uncorrected update is a stochastic approximation to the operator $(T^\mu)^{n-1}T^\pi$, whilst the importance-weighted update is a stochastic approximation to $(T^\pi)^n$. From this, it follows that under usual stochastic approximation conditions, a sequence of importance-weighted updates will converge to the true action-function $Q^\pi$ associated with $\pi$, whilst the uncorrected updates will converge to the value function of the time-inhomogeneous policy that follows $\pi$ for a single step, followed by $n-1$ steps of $\mu$, and then repeats; see Proposition \ref{prop:fixedpoints} in Appendix Section \ref{sec:appendixMoreResults} for further explanation.

The above discussion shows that we may view the use of uncorrected returns as trading off update \emph{variance} for \emph{accuracy of the operator fixed point}; an example of the classical bias-variance trade-off in statistics and machine learning, albeit in the context of fixed-point iteration.

\textbf{Retrace. } \citet{munos2016safe} proposed an off-policy evaluation update target, Retrace, given in its forward-view version by
\begin{align}\label{eq:retrace}
    \hat{Q}(x_0, a_0) \!+\! \sum_{s \geq 0} \bar{\rho}_{1:s} \gamma^s \Delta_s \, ,
\end{align}
where we write $\bar{\rho}_t = \min(1,\rho_t)$, and $\bar{\rho}_{s:t} = \prod_{u=s}^t \bar{\rho}_u$ for each $1 \leq s \leq t$, and define the temporal difference (TD) error $\Delta_s$ by
\begin{align*}
    \Delta_s \eqdef r_{s} + \gamma \mathbb{E}_{A \sim \pi(\cdot|x_{s+1})}\!\left\lbrack \hat{Q}(x_{s+1}, A) \right\rbrack\! - \hat{Q}(x_{s}, a_{s}) \, .
\end{align*}
By clipping the importance weights associated with each TD error, the variance associated with the update rule is reduced relative to importance-weighted returns, whilst no bias is introduced; the fixed point of the associated Retrace operator remains the true action-value function $Q^\pi$. However, the clipping of the importance weights effectively \emph{cuts the traces} in the update, resulting in the update placing less weight on later TD errors, and thus worsening the contraction rate of the corresponding operator. Thus, Retrace can be interpreted as trading off a \emph{reduction in update variance} for a \emph{larger contraction rate}, relative to importance-weighted $n$-step returns.

We discuss more examples of off-policy learning algorithms in Section \ref{sec:relatedwork}. We also note that $\lambda$-variants of the algorithms described above also exist; for clarity and conciseness, we limit our exposition to the case $\lambda=1$ in the main paper, noting that the results straightforwardly extend to $\lambda \in (0,1)$.

We now briefly return to Figure \ref{fig:first-tradeoff}, which quantitatively illustrates the trade-offs discussed above. We highlight several interesting observations. Whilst all importance-weighted updates have no fixed-point bias, their variance grows exceptionally quickly with $n$. Retrace manages to achieve a similar contraction rate to the $3$-step importance-weighted update, but without incurring high variance. Our new algorithm, \alpharetrace, appears to be Pareto efficient relative to the $n$-step uncorrected methods in the left-most plot; for any contraction rate that an $n$-step uncorrected method achieves, there is a value of $\alpha$ such that \alpharetrace\ achieves this contraction rate whilst incurring less fixed-point bias; this is corroborated by further empirical results in Appendix Section \ref{sec:appendixMoreResults}.

\subsection{Downstream tasks and bounds}\label{sec:downstreamtasks}
Whilst the trade-offs at the level of individual updates described above are straightforward to describe, in reinforcement learning we are ultimately interested in one of two problems, either \emph{evaluation} or \emph{control}, defined formally below.

\textbf{The evaluation problem. } Given a target policy $\pi$, a budget of experience generated from a behaviour policy $\mu$, and a computational budget, compute an accurate approximation $\hat{Q}$ to $Q^\pi$, in the sense of incurring low error $\|\hat{Q} - Q^\pi\|$, for some norm $\|\cdot \|$.

\textbf{The control problem. } Given a budget of experience and computation, find a policy $\pi$ such that expected return under $\pi$ is maximised.

It is intuitively clear that for each of these problems, an evaluation scheme with low contraction rate, low update variance, and low fixed-point bias is advantageous, but no update is known to possess all three of these attributes simultaneously. What is less clear is how these three properties should be traded off against one another in designing an efficient off-policy learning algorithm. For example, how much fixed-point accuracy should one be willing trade off in exchange for a halved update variance? Such questions, in general, have complicated dependence on the precise update rule, policies in question, and environment, and so it appears unlikely that too much progress can be made in great generality. However, we can provide some some insights from understanding this fundamental trade-off.

\begin{restatable}{proposition}{propDecomposition}\label{prop:decomposition}
    Consider the task of evaluation of a policy $\pi$ under behaviour $\mu$, and consider an update rule $\hat{T}$ which stochastically approximates the application of an operator $\widetilde{T}$, with contraction rate $\Gamma$ and fixed point $\widetilde{Q}$, to an initial estimate $Q$. Then we have the following decomposition:
    \begin{align*}
        &\mathbb{E}\left\lbrack \| \hat{T}Q - Q^\pi \|_\infty \right\rbrack \leq \\
        &\underbrace{\mathbb{E}\left\lbrack \| \hat{T}Q - \widetilde{T}Q \|^2_2 \right\rbrack^{1/2}}_{\text{(Root) variance}} + \underbrace{\Gamma\| Q - \widetilde{Q} \|_\infty}_{\text{Contraction}} + \underbrace{\| \widetilde{Q} - Q^\pi \|_2}_{\text{Fixed-point bias}} \, .
    \end{align*}
\end{restatable}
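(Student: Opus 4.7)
The plan is a standard add-and-subtract triangle inequality, with two small extra ingredients: (a) passing from $\|\cdot\|_2$ to $\|\cdot\|_\infty$ via the elementary bound $\|v\|_\infty \leq \|v\|_2$ on a finite coordinate space, and (b) moving the expectation inside a square root via Jensen's inequality. No subtlety from the MDP structure is needed beyond the hypothesis that $\widetilde{Q}$ is the fixed point of $\widetilde{T}$.

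First I would write
\begin{align*}
\hat{T}Q - Q^\pi = (\hat{T}Q - \widetilde{T}Q) + (\widetilde{T}Q - \widetilde{Q}) + (\widetilde{Q} - Q^\pi),
\end{align*}
and apply the triangle inequality in $\|\cdot\|_\infty$ to obtain three terms. The middle term simplifies because $\widetilde{Q} = \widetilde{T}\widetilde{Q}$, so
\begin{align*}
\|\widetilde{T}Q - \widetilde{Q}\|_\infty = \|\widetilde{T}Q - \widetilde{T}\widetilde{Q}\|_\infty \leq \Gamma\|Q - \widetilde{Q}\|_\infty,
\end{align*}
by definition of the contraction rate. This yields the middle summand directly.

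Next I would handle the variance term. Using $\|v\|_\infty \leq \|v\|_2$ on $\mathbb{R}^{\statespace\times\actionspace}$, I can pass from $\|\hat{T}Q - \widetilde{T}Q\|_\infty$ to $\|\hat{T}Q - \widetilde{T}Q\|_2$, and then Jensen's inequality applied to the concave function $\sqrt{\cdot}$ gives $\mathbb{E}[\|\hat{T}Q - \widetilde{T}Q\|_2] \leq \mathbb{E}[\|\hat{T}Q - \widetilde{T}Q\|_2^2]^{1/2}$, recovering the first summand. The fixed-point bias term is handled identically by $\|\widetilde{Q} - Q^\pi\|_\infty \leq \|\widetilde{Q} - Q^\pi\|_2$, and it is deterministic so the expectation is trivial.

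There is essentially no main obstacle here; the only thing worth being careful about is that the norms in the decomposition are a mix of $\|\cdot\|_\infty$ and $\|\cdot\|_2$, so one must apply the norm-comparison inequality in the correct direction (only the variance and bias terms get relaxed from $\infty$ to $2$; the contraction term stays in $\infty$-norm because that is where the contraction rate is defined). Combining the three bounds and taking expectations completes the proof.
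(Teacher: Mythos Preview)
Your proposal is correct and matches the paper's proof essentially step for step: triangle inequality into three pieces, the contraction bound $\|\widetilde{T}Q - \widetilde{T}\widetilde{Q}\|_\infty \leq \Gamma\|Q - \widetilde{Q}\|_\infty$ for the middle term, and $\|\cdot\|_\infty \leq \|\cdot\|_2$ together with Jensen for the variance and bias terms. There is nothing to add.
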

Note that $\hat{T}$ is arbitrary, and may, for example, represent the $n$-fold application of a simpler operator. 
This result gives some sense of how these trade-offs feed into evaluation quality; related decompositions are also possible (see Appendix Section \ref{sec:appendixMoreResults}). We next show that there really is a trade-off to be made, in the sense that it is not possible for an update based on limited data to simultaneously have low variance, contraction rate, and fixed-point bias across a range of MDPs.
\begin{restatable}{theorem}{thmLowerBound}\label{thm:lowerbound}
    Consider an update rule $\hat{T}$ with corresponding operator $T$, and consider the collection $\mathcal{M}=\mathcal{M}(\statespace, \actionspace, P, \gamma, R_\text{max})$ of MDPs with common state space, action space, transition kernel, and discount factor (but varying rewards, with maximum immediate reward bounded in absolute value by $R_\text{max}$). Fix a target policy $\pi$, and a random variable $Z$, the set of transitions used by the operator $\hat{T}$; these could be transitions encountered in a trajectory following the behaviour $\mu$, or i.i.d. samples from the discounted state-action visitation distribution under $\mu$. We denote the mismatch between $\pi$ and $Z$ at level $\delta \in (0,1)$ by
    \begin{align*}
        D(Z,\pi,\delta) \eqdef \max \{ &d_{(x, a), \pi}(\Omega)\ |\ \Omega \subseteq \statespace\times\actionspace \text{ s.t.}\\
        &\mathbb{P}(Z \cap \Omega \not= \emptyset) \leq \delta\, ,  (x, a) \!\in\! \statespace\!\times\!\actionspace\} \, ,
    \end{align*}
    where $d_{(x, a), \pi}$ is the discounted state-action visitation distribution for trajectories initialised with $(x, a)$, following $\pi$ thereafter. 
    Denoting the variance, contraction rate, and fixed-point bias of $\hat{T}$ for a particular MDP $M \in \mathcal{M}$ by $\mathbb{V}(M)$, $\Gamma(M)$ and $B(M)$ respectively, we have
    \begin{align*}
        &\sup_{M \in \mathcal{M}} \left\lbrack \sqrt{\mathbb{V}(M)} + \frac{2R_\text{max}}{1-\gamma} \Gamma(M) + B(M)  \right\rbrack \geq \\
        &\qquad\qquad \sup_{\delta \in (0,1)} (1-\delta) D(Z, \pi, \delta) R_\text{max}/(1-\gamma) \, .
    \end{align*}
\end{restatable}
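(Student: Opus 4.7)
The plan is to exhibit two MDPs in $\mathcal{M}$ that $\hat{T}$ cannot statistically distinguish --- because the data it sees is concentrated away from where the rewards differ --- and then turn the resulting inescapable evaluation error into the claimed lower bound via Proposition~\ref{prop:decomposition}. I fix $\delta \in (0,1)$, set $D = D(Z, \pi, \delta)$, and pick $\Omega \subseteq \statespace \times \actionspace$ together with $(x_0, a_0) \in \statespace \times \actionspace$ attaining the maximum in the definition of $D$, so that $\mathbb{P}(Z \cap \Omega \neq \emptyset) \leq \delta$ and $d_{(x_0, a_0), \pi}(\Omega) = D$. I build $M_+, M_- \in \mathcal{M}$ sharing $(\statespace, \actionspace, P, \gamma)$ with the common MDP but having (deterministic) rewards $r_\pm(x, a) = \pm R_\text{max} \mathbf{1}[(x, a) \in \Omega]$. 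Interpreting $Q^\pi$ as $R_\text{max}$ times the expected discounted occupancy of $\Omega$ under $\pi$ gives $Q^\pi_\pm(x_0, a_0) = \pm R_\text{max} D/(1-\gamma)$, so $\|Q^\pi_+ - Q^\pi_-\|_\infty \geq 2 R_\text{max} D/(1-\gamma)$.

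Next, I couple the behaviour trajectories in $M_+$ and $M_-$ via a single random source; since $P$ and $\mu$ do not depend on rewards, the coupled state-action sequences agree pathwise, and on the event $E = \{Z \cap \Omega = \emptyset\}$, which has probability at least $1-\delta$, all observed rewards vanish in both MDPs. Hence $\hat{T}_+(Q, \cdot) = \hat{T}_-(Q, \cdot)$ as elements of $\mathbb{R}^{\statespace \times \actionspace}$ on $E$, and the triangle inequality for $\|\cdot\|_\infty$ yields, on $E$,
\[ \|\hat{T}_+ Q - Q^\pi_+\|_\infty + \|\hat{T}_- Q - Q^\pi_-\|_\infty \;\geq\; \|Q^\pi_+ - Q^\pi_-\|_\infty \;\geq\; 2 R_\text{max} D/(1-\gamma). \]
Taking expectations and using $\mathbb{P}(E) \geq 1-\delta$, the sum of the two expected supremum errors is at least $2(1-\delta) R_\text{max} D/(1-\gamma)$, so for at least one $M^\star \in \{M_+, M_-\}$ I get $\mathbb{E}[\|\hat{T} Q - Q^\pi_{M^\star}\|_\infty] \geq (1-\delta) R_\text{max} D/(1-\gamma)$.

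Finally, I invoke Proposition~\ref{prop:decomposition} in $M^\star$ with input $Q = 0$. Since $\|\widetilde{Q}\|_\infty \leq \|\widetilde{Q} - Q^\pi_{M^\star}\|_2 + \|Q^\pi_{M^\star}\|_\infty \leq B(M^\star) + R_\text{max}/(1-\gamma)$, a short dichotomy on whether $B(M^\star) \geq R_\text{max}/(1-\gamma)$ (in which case the claim is immediate) or not (in which case $\|Q - \widetilde{Q}\|_\infty \leq 2R_\text{max}/(1-\gamma)$) gives $\sqrt{\mathbb{V}(M^\star)} + \frac{2 R_\text{max}}{1-\gamma} \Gamma(M^\star) + B(M^\star) \geq (1-\delta) R_\text{max} D/(1-\gamma)$. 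Taking supremum over $M \in \mathcal{M}$ and then over $\delta \in (0,1)$ closes the argument. The main obstacle is this last step: matching the coefficient $2R_\text{max}/(1-\gamma)$ on $\Gamma$ exactly requires carefully bounding $\|Q - \widetilde{Q}\|_\infty$, because $\widetilde{Q}$ is not a priori confined to $[-R_\text{max}/(1-\gamma), R_\text{max}/(1-\gamma)]^{\statespace \times \actionspace}$; the dichotomy above is what avoids any extra slack.
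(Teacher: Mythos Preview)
Your proof is correct and follows essentially the same two-MDP indistinguishability construction as the paper: choose $\Omega$ and $(x_0,a_0)$ attaining $D(Z,\pi,\delta)$, place rewards $\pm R_\text{max}$ on $\Omega$, argue that on $\{Z\cap\Omega=\emptyset\}$ the updates coincide, and deduce a lower bound on $\mathbb{E}[\|\hat{T}Q-Q^\pi\|_\infty]$ for one of the two MDPs via Proposition~\ref{prop:decomposition}. Your final dichotomy on whether $B(M^\star)\geq R_\text{max}/(1-\gamma)$, used to force $\|Q-\widetilde{Q}\|_\infty\leq 2R_\text{max}/(1-\gamma)$ when $Q=0$, is in fact more careful than the paper's own treatment, which simply asserts that ``the statement of the theorem follows'' from the error lower bound without spelling out how the coefficient $\tfrac{2R_\text{max}}{1-\gamma}$ on $\Gamma(M)$ is obtained.
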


In addition to the above results, which we believe to be novel, there is extensive literature exploring particular aspects of these trade-offs, which we discuss further in Section \ref{sec:relatedwork}. Having made this space of trade-offs between contraction, bias, and variance explicit, a natural questions is how other update rules might be modified to exploit different parts of the space. In particular, incurring some amount of fixed-point bias for reduced variance made by $n$-step uncorrected returns in Rainbow and R2D2 is particularly effective in practice --- is there a way to introduce a similar trade-off in an algorithm with adaptive trace lengths, such as Retrace? We explore this question in the next section.
\section{New off-policy updates: \alpharetrace\ and \ctrace}\label{sec:retrace-alpha}

The Retrace update in Equation \eqref{eq:retrace} has been observed, in certain scenarios, to cut traces prematurely \citep{mahmood2017multi}; that is, using $n$-step uncorrected returns for suitable $n$ leads to a superior contraction rate relative to Retrace, outweighing the corresponding incurred bias. A natural question is how Retrace can be modified to overcome this phenomenon. In the language of Section \ref{sec:contractionbiasvariance}, is there a way that Retrace can be adapted so as to trade off contraction rate for fixed-point bias? The reduced contraction rate comes from cases where the truncated importance weights $\min(1, \pi(a_t|x_t)/\mu(a_t|x_t))$ appearing in \eqref{eq:retrace} are small, so a natural way to improve the contraction rate is to move the target policy closer towards the behaviour.

\begin{algorithm}[H]
    \caption{\alpharetrace\ for policy iteration}
    \label{alg:retracealpha-PI}
    \begin{algorithmic}
        \STATE{Initialise target policy $\widetilde{\pi}$ and behaviour $\mu$.}
        \FOR{each policy improvement round:}
        \STATE{Select $\alpha \in [0,1]$, and set new target policy $\pi = \alpha \widetilde{\pi} + (1-\alpha)\mu$.}
        \STATE{Learn $\hat{Q}^\pi : \statespace \times \actionspace \rightarrow \mathbb{R}$ via Retrace under behaviour policy $\mu$.}
        \STATE{Set $\widetilde{\pi} = \text{Greedy}(\hat{Q}^\pi)$.}
        \STATE{Set new behaviour policy $\mu$.}
    \ENDFOR
    \end{algorithmic}
\end{algorithm}
To this end, we propose \alpharetrace, a family of algorithms that applies Retrace to a target policy given by a mixture of the original target and the behaviour, thus achieving the aforementioned trade-off.
In Algorithm \ref{alg:retracealpha-PI}, we describe how \alpharetrace\ can be used within a (modified) policy iteration scheme for control.
Note that \alpharetraceone\ is simply the standard Retrace algorithm. We refer back to Figure~\ref{fig:first-tradeoff}, the left-most plot of which shows that this mixture coefficient precisely yields a trade-off between fixed-point bias and contraction rate that we sought at the end of Section~\ref{sec:contractionbiasvariance}.

The means by which $\alpha$ should be set is left open at this stage; adjusting it allows a trade-off of contraction rate and fixed-point bias. In Section \ref{sec:adaptingalpha}, we describe a stochastic approximation procedure for updating $\alpha$ online to obtain a desired contraction rate. 

\textbf{Specificity to Retrace. } Whilst the mixture target of \alpharetrace\ is a natural choice, we highlight that this choice is in fact specific to the structure of Retrace. In Appendix Section \ref{sec:further-tradeoff}, we visualise trade-offs made by 
analogous adjustment to the TreeBackup update \citep{precup2000eligibility}, showing that mixing the behaviour policy into the target simply leads to an accumulation of fixed-point bias, with limited benefits in terms of contraction rate or variance.

\subsection{Analysis}

We now provide several results describing the contraction rate of \alpharetrace\ in detail, and how the fixed-point bias introduced by $\alpha < 1$ may be useful in the case of control tasks. We begin with a preliminary definition.

\begin{definition}
    For a state-action pair $(x, a) \in \statespace\times\actionspace$, 
    Two policies $\pi_1$, $\pi_2$ are said to be $(x, a)$-\textbf{distinguishable} under a third policy $\mu$ if there exists $x^\prime \in \statespace$ in the support of the discounted state visitation distribution under $\mu$ starting from state-action pair $(x, a)$, such that $\pi_1(\cdot|x^\prime) \not= \pi_2(\cdot|x^\prime)$, and are said to be $(x, a)$-\textbf{indistinguishable} under $\mu$ otherwise.
\end{definition}

\begin{restatable}{proposition}{propContractionRate}\label{prop:contractionrate}
    The operator associated with the \alpharetrace\ update for evaluating $\pi$ given behaviour $\mu$ has a state-action-dependent contraction rate of
    \begin{align}\label{eq:retracealphacontraction}
        &\alpharetraceContract(\alpha| x, a) \eqdef 1 - (1-\gamma) \times  \\
        & \ \ \mathbb{E}_\mu\!\left\lbrack \sum_{t=0}^\infty \gamma^t \prod_{s=1}^t \left((1-\alpha) + \alpha\bar{\rho}_s \right)\middle| (X_0,A_0) = (x, a) \right\rbrack \, , \nonumber
    \end{align}
    for each $(x, a) \in \statespace\times\actionspace$. 
    Viewed as a function of $\alpha \in [0,1]$, this contraction rate is continuous, monotonically increasing, with minimal value $0$, and maximal value no greater than $\gamma$. Further, the contraction rate is \emph{strictly} monotonic iff $\pi$ and $\mu$ are $(x, a)$-distinguishable under $\mu$.
\end{restatable}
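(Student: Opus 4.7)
The plan is to reduce the analysis to the standard Retrace contraction result of \citet{munos2016safe} by reinterpreting $\alpha$-Retrace as Retrace with the mixture target $\pi_\alpha \defeq \alpha \pi + (1-\alpha)\mu$, and then reading off continuity, monotonicity, and the range bounds directly from the resulting formula. The key algebraic observation is that the truncated importance weight for Retrace under target $\pi_\alpha$ satisfies
\begin{align*}
\min\!\bigl(1,\, \pi_\alpha(a_s|x_s)/\mu(a_s|x_s)\bigr) = \min(1,\, \alpha \rho_s + (1-\alpha)) = (1-\alpha) + \alpha \bar{\rho}_s,
\end{align*}
where the last equality follows from a case split on whether $\rho_s \geq 1$ or $\rho_s < 1$. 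This identifies $\alpha$-Retrace with a Retrace instance whose trace coefficients satisfy the importance-ratio bound $c_s \leq \pi_\alpha(a_s|x_s)/\mu(a_s|x_s)$, so the pointwise form of the Retrace contraction proof yields \eqref{eq:retracealphacontraction} directly.

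Given the formula, continuity in $\alpha$ is immediate from dominated convergence, since the $t$-th summand is bounded by $\gamma^t$ uniformly in $\alpha$. For monotonicity, each factor $(1-\alpha) + \alpha \bar{\rho}_s$ lies in $[\bar{\rho}_s, 1]$ and is non-increasing in $\alpha$ because $\bar{\rho}_s \leq 1$; products and $\gamma^t$-weighted sums of nonnegative non-increasing functions remain non-increasing, so $\alpharetraceContract(\alpha|x,a)$ is non-decreasing in $\alpha$. At $\alpha = 0$ every factor equals $1$, the inner sum collapses to $\sum_{t \geq 0} \gamma^t = 1/(1-\gamma)$, and $\alpharetraceContract(0|x,a) = 0$; at $\alpha = 1$ the $t=0$ summand alone contributes $1$ to the inner sum, so it is at least $1$, giving $\alpharetraceContract(1|x,a) \leq \gamma$.

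The delicate part is the strict monotonicity iff. The easy direction uses indistinguishability: every state $x'$ hit with positive probability by a $\mu$-trajectory from $(x,a)$ has $\pi(\cdot|x') = \mu(\cdot|x')$, so $\bar{\rho}_s = 1$ almost surely for all $s \geq 1$, every factor equals $1$, and $\alpharetraceContract(\alpha|x,a) \equiv 0$. Conversely, distinguishability yields $x^*$ in the support of the discounted visitation with $\pi(\cdot|x^*) \neq \mu(\cdot|x^*)$; a normalization argument then produces an action $a^*$ with $\mu(a^*|x^*) > 0$ and $\pi(a^*|x^*) < \mu(a^*|x^*)$, so $\min(1, \pi(a^*|x^*)/\mu(a^*|x^*)) < 1$. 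Because $x^*$ lies in the support, the $\mu$-trajectory reaches $x^*$ at some time $s \geq 1$ and selects $a^*$ with positive probability; on this event the factor $(1-\alpha) + \alpha\bar{\rho}_s$ is strictly affine decreasing in $\alpha$, and pointwise strict monotonicity of the integrand upgrades to strict monotonicity of the expectation. I expect this last step to be the main obstacle: carefully matching the distinguishability condition (a support statement about the discounted state visitation) to a positive-probability event on which some trace factor with index $s \geq 1$ drops strictly below $1$.
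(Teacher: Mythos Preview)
Your proposal is correct and follows essentially the same route as the paper: reduce $\alpha$-Retrace to Retrace with mixture target $\pi_\alpha$, invoke the Munos et al.\ contraction formula via the identity $\min(1,\alpha\rho_s+(1-\alpha))=(1-\alpha)+\alpha\bar\rho_s$, then read off continuity (DCT), range, monotonicity, and the strict-monotonicity characterisation. The only cosmetic difference is that for monotonicity the paper differentiates the integrand $\prod_s(1-z_s\alpha)$ and passes the derivative through the expectation via DCT, whereas you argue more directly that products of nonnegative non-increasing factors are non-increasing; both arguments are fine, and the strict-monotonicity step (including your ``normalization argument'' producing $a^*$ with $\pi(a^*|x^*)<\mu(a^*|x^*)$) matches the paper's treatment.
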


The exact contraction rate of \alpharetrace\ is thus $\sup_{(x, a) \in \statespace\times\actionspace} \alpharetraceContract(\alpha|x, a)$, which inherits the continuity and monotonicity properties of the state-action-dependent rates. Our next result motivates the use of \alpharetrace\ within control algorithms.

\begin{restatable}{proposition}{thmbettercontraction}
    Consider a target policy $\pi$, let $\mu$ be the behavioural policy, and assume that there is a unique greedy action $a^*(x) \in \actionspace$ with respect to $Q^\pi$ at each state $x$ for each $x \in \statespace$. 
    Then there exists a value of $\alpha \in (0,1)$ such that the greedy policy with respect to the fixed point of \alpharetrace\ coincides with the greedy policy with respect to $Q^\pi$, and the contraction rate for this \alpharetrace\ is no greater than that for \alpharetraceone. 
    Further, if $\pi$ and $\mu$ are $(x, a)$-distinguishable under $\mu$ for all $(x, a)\in\statespace\times\actionspace$, then the contraction rate of \alpharetrace\ is strictly lower than that of \alpharetraceone.
\end{restatable}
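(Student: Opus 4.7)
The plan is to identify the fixed point of \alpharetrace\ with target $\pi$ and behaviour $\mu$ as $Q^{\pi_\alpha}$ for the mixture policy $\pi_\alpha \defeq \alpha\pi + (1-\alpha)\mu$, then transfer the strict greedy-action ordering of $Q^\pi$ to $Q^{\pi_{\alpha^*}}$ for some $\alpha^* < 1$ via continuity of $\alpha \mapsto Q^{\pi_\alpha}$, and finally invoke Proposition~\ref{prop:contractionrate} for the contraction-rate comparisons.

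First, since \alpharetrace\ is by definition Retrace applied with target policy $\pi_\alpha$ (and $\pi_\alpha$ is absolutely continuous with respect to $\mu$ whenever $\pi$ is), standard Retrace theory gives that its associated operator is a contraction in $\|\cdot\|_\infty$ with unique fixed point $Q^{\pi_\alpha}$; at $\alpha = 1$ this reduces to $Q^\pi$. I would then argue that $\alpha \mapsto Q^{\pi_\alpha}$ is continuous on $[0,1]$ via the closed form $Q^{\pi'} = (I - \gamma M_{\pi'})^{-1} r$, where $M_{\pi'}$ is the state-action transition matrix with entries $P(x' | x,a)\pi'(a' | x')$: this depends linearly, hence continuously, on $\pi'$, and matrix inversion is continuous on the open set of invertible matrices, which contains $I - \gamma M_{\pi_\alpha}$ since $\gamma \in [0,1)$.

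By the unique-greedy-action hypothesis together with finiteness of $\statespace$, the action-gap $\Delta \defeq \min_{x \in \statespace}\bigl[Q^\pi(x, a^*(x)) - \max_{a \ne a^*(x)} Q^\pi(x, a)\bigr]$ is strictly positive. Continuity then supplies some $\alpha^* \in (0,1)$ close to $1$ with $\|Q^{\pi_{\alpha^*}} - Q^\pi\|_\infty < \Delta/2$, which forces $\text{Greedy}(Q^{\pi_{\alpha^*}}) = \text{Greedy}(Q^\pi)$ at every state, establishing the first claim. For the contraction-rate claims I would invoke Proposition~\ref{prop:contractionrate} directly, after verifying that the truncated importance ratios $\min\!\bigl(1, \alpha\rho_s + (1-\alpha)\bigr)$ arising from Retrace with target $\pi_\alpha$ equal $(1-\alpha) + \alpha\bar{\rho}_s$ in both regimes $\rho_s \geq 1$ and $\rho_s < 1$, so the state-action contraction rate of \alpharetrace\ is exactly $\alpharetraceContract(\alpha \,|\, x, a)$ in \eqref{eq:retracealphacontraction}. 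Monotonicity in $\alpha$ then gives $\alpharetraceContract(\alpha^*|x,a) \leq \alpharetraceContract(1|x,a)$ for every $(x,a)$, and taking suprema yields the non-strict contraction-rate comparison. Under global $(x,a)$-distinguishability the same proposition upgrades monotonicity to strict monotonicity at every $(x,a)$; finiteness of $\statespace\times\actionspace$ means the outer supremum is attained at some $(x^\dagger, a^\dagger)$, at which $\alpharetraceContract(\alpha^*|x^\dagger,a^\dagger) < \alpharetraceContract(1|x^\dagger,a^\dagger) \leq \sup_{(x,a)} \alpharetraceContract(1|x,a)$, yielding the strict inequality.

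The main obstacle is really only the bookkeeping that identifies the Retrace weights $\min(1, \alpha\rho_s + (1-\alpha))$ with the convex-combination form $(1-\alpha) + \alpha\bar{\rho}_s$ from Proposition~\ref{prop:contractionrate}; once this identification is in place, the remainder combines continuity of the policy-evaluation map with finiteness of $\statespace\times\actionspace$ to promote the uniform action-gap and the monotonicity in $\alpha$ into the stated conclusions.
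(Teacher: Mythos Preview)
Your proposal is correct and follows essentially the same approach as the paper: continuity of $\alpha \mapsto Q^{\pi_\alpha}$ combined with a positive minimum action gap to preserve the greedy policy for some $\alpha^*<1$, followed by the monotonicity statement of Proposition~\ref{prop:contractionrate} for the contraction-rate comparison. Your write-up simply supplies more detail than the paper's very terse proof (the explicit matrix-inverse argument for continuity, and the attainment-of-supremum step for the strict inequality), but the underlying ideas are the same.
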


\begin{figure}
	\centering
	\includegraphics[keepaspectratio,width=.35\textwidth]{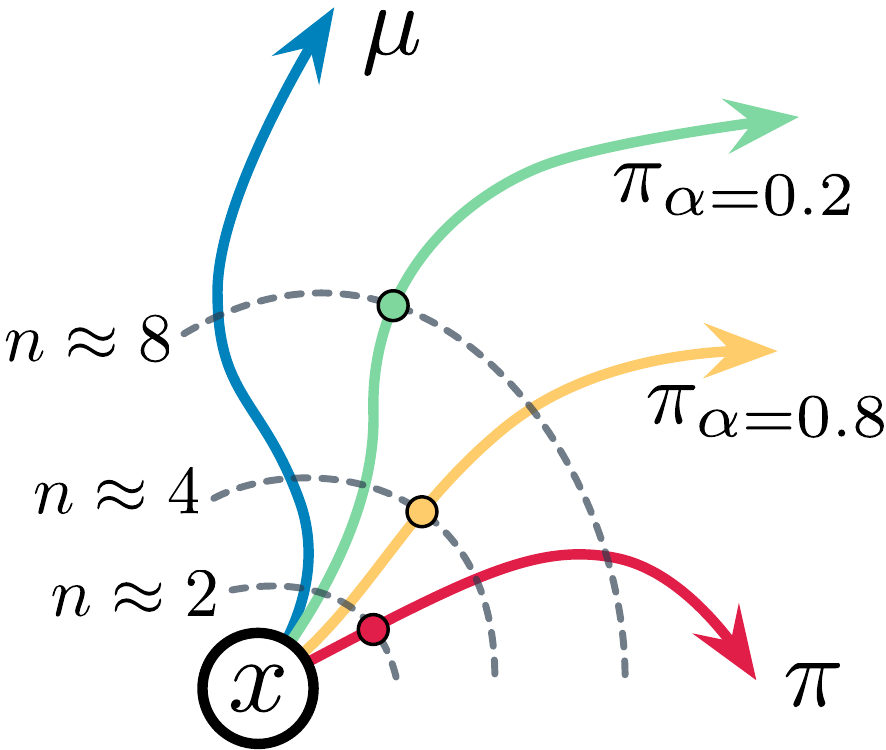}
	\caption{Interpolating between target policy $\pi$ and behaviour policy $\mu$ with $\alpha \in \{0.0, 0.2, 0.8, 1.0\}$ produces different expected trajectories shown by each coloured line. As the mixture policy more closely resembles the behaviour policy, $\alpha$-Retrace allows more off-policy data to be used (dashed line, numbers indicate expected trace-length), cuts traces (coloured points) later, yielding lower contraction rates equivalent to $n$-step methods with larger $n$.
	C-trace adapts $\alpha$ online to achieve a stable trace length throughout training.}
	\label{fig:diagram}
\end{figure}

\subsection{\ctrace: adapting $\alpha$ online}\label{sec:adaptingalpha}

An empirical shortcoming of Retrace noted earlier is its tendency to pessimistically cut traces. Adapting the mixture parameter $\alpha$ within \alpharetrace\ yields a natural way to ensure that a desired trace length (or contraction rate) is attained. In this section, we propose \ctrace, which uses \alpharetrace\ updates whilst dynamically adjusting $\alpha$ to attain a target contraction rate $\Gamma$; a schematic illustration is given in Figure~\ref{fig:diagram}.

The contraction rate $\sup_{(x, a) \in \statespace\times\actionspace} \alpharetraceContract(\alpha| x, a)$ is difficult to estimate online, so we work instead with the averaged contraction rate $\alpharetraceContract_\nu(\alpha) = \mathbb{E}_{(X, A) \sim \nu}\lbrack \alpharetraceContract(\alpha|X, A) \rbrack$, where $\nu$ is the training distribution over state-action pairs; where clear, we will drop $\nu$ from notation. 
It follows straighforwardly from Proposition \ref{prop:contractionrate} that $\alpharetraceContract_\nu(\alpha)$ is monotonic in $\alpha$. This suggests that a standard Robbins-Monro stochastic approximation update rule for $\alpha$ may be applied to guide $\alpharetraceContract_\nu(\alpha)$ towards $\Gamma$ --- we describe such a scheme below.
To avoid optimisation issues with the constraint $\alpha \in [0,1]$, we parameterise $\alpha$ as $\sigma(\phi)$, where $\sigma$ is the standard sigmoid function, and $\phi \in \mathbb{R}$ is an unconstrained real variable. For brevity, we will simply write $\alpha(\phi)$. Since $\sigma$ is monotonic and continuous, the contraction rate is still monotonic and continuous in $\phi$.

Recall from \eqref{eq:retracealphacontraction} that the contraction rate $\alpharetraceContract(\alpha|x, a)$ of the \alpharetrace\ operator with target $\pi$ and behaviour $\mu$ can be expressed as an expectation over trajectories following $\mu$, and thus can be unbiasedly approximated using such trajectories; given an i.i.d. sequence of trajectories $(x^{(k)}_t, a^{(k)}_t, r^{(k)}_t)_{t \geq 0}$, we write $\hat{\alpharetraceContract}^{(k)}(\alpha(\phi))$ for the corresponding estimates of $\alpharetraceContract(\alpha(\phi))$. 
If the target contraction rate is $\Gamma$, we can adjust an initial parameter $\phi_0 \in \mathbb{R}$ using these estimates according to the Robbins-Monro rule
\begin{align}\label{eq:SAupdate}
    \phi_{k+1} = \phi_k - \varepsilon_k\left( \hat{\alpharetraceContract}^{(k)}(\alpha(\phi_k)) - \Gamma \right)  \qquad \forall k \geq 0 \, ,
\end{align}
for some sequence of stepsizes $(\varepsilon_k)_{k=0}^\infty$. The following result gives a theoretical guarantee for the correctness of this procedure.

\begin{restatable}{proposition}{propSA}\label{prop:sa}
    Let $(x^{(k)}_t,a^{(k)}_t, r^{(k)}_t)_{t=0}^\infty$ be an i.i.d. sequence of trajectories following $\mu$, with initial state-action distribution given by $\nu$. Let $\Gamma$ be a target contraction rate such that $\alpharetraceContract_\nu(1) \geq \Gamma$. Let the stepsizes $(\varepsilon_k)_{k=0}^\infty$ satisfy the usual Robbins-Monro conditions $\sum_{k=0}^\infty \varepsilon_k = \infty$, $\sum_{k=0}^\infty \varepsilon_k^2 < \infty$. Then for any initial value $\phi_0$ following the updates in \eqref{eq:SAupdate}, we have $\phi_k \rightarrow \phi^*$ in probability, where $\phi^* \in \mathbb{R}$ is the unique value such that $\alpharetraceContract_\nu(\alpha(\phi^*)) = \Gamma$.
\end{restatable}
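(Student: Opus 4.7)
The plan is to recognize the update rule as a standard Robbins-Monro procedure for finding the zero of the function $f : \mathbb{R} \rightarrow \mathbb{R}$ defined by $f(\phi) = \alpharetraceContract_\nu(\alpha(\phi)) - \Gamma$, and to verify the conditions of a standard stochastic approximation theorem. First, I would rewrite the update as
\begin{equation*}
\phi_{k+1} = \phi_k - \varepsilon_k\bigl(f(\phi_k) + M_{k+1}\bigr),
\end{equation*}
where $M_{k+1} = \hat{\alpharetraceContract}^{(k)}(\alpha(\phi_k)) - \alpharetraceContract_\nu(\alpha(\phi_k))$. The i.i.d. assumption on trajectories gives that $(M_k)$ is a martingale difference sequence with respect to the natural filtration; boundedness of $\hat{\alpharetraceContract}^{(k)}(\alpha) \in [0,1]$ (which follows directly from its definition in \eqref{eq:retracealphacontraction}, since each factor $(1-\alpha)+\alpha\bar{\rho}_s \in [0,1]$ and $(1-\gamma)\sum_{t\geq 0}\gamma^t=1$) yields uniformly bounded conditional variance.

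Second, I would establish existence, uniqueness, and a useful sign structure for $\phi^*$. By Proposition \ref{prop:contractionrate}, for each $(x,a)$ the map $\alpha \mapsto \alpharetraceContract(\alpha|x,a)$ is continuous and monotonically non-decreasing on $[0,1]$ with $\alpharetraceContract(0|x,a)=0$; taking expectations under $\nu$ preserves these properties, so $\alpharetraceContract_\nu$ has the same properties. Composition with the strictly increasing continuous map $\sigma : \mathbb{R} \rightarrow (0,1)$ gives that $f$ is continuous and non-decreasing in $\phi$, with $f(\phi) \rightarrow -\Gamma \leq 0$ as $\phi\rightarrow-\infty$ and $f(\phi) \rightarrow \alpharetraceContract_\nu(1)-\Gamma \geq 0$ as $\phi\rightarrow+\infty$ (using the hypothesis). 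The intermediate value theorem then yields at least one zero $\phi^*$; uniqueness follows from the strict-monotonicity clause of Proposition \ref{prop:contractionrate}, since distinguishability of $\pi$ and $\mu$ at some $(x,a)$ in the support of $\nu$ makes $\alpharetraceContract_\nu$, hence $f$, strictly monotonic, and otherwise the trade-off is vacuous. The key monotone-sign property $(\phi - \phi^*) f(\phi) \geq 0$ follows immediately from monotonicity.

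Third, I would use a quadratic Lyapunov function $V(\phi) = (\phi - \phi^*)^2$ to carry out the convergence analysis. Expanding the update yields
\begin{equation*}
\mathbb{E}\bigl[V(\phi_{k+1}) \,\big|\, \mathcal{F}_k\bigr] \leq V(\phi_k) - 2\varepsilon_k (\phi_k - \phi^*) f(\phi_k) + \varepsilon_k^2 K
\end{equation*}
for some constant $K$ coming from the bound on $(f(\phi_k)+M_{k+1})^2$. Since $(\phi_k - \phi^*) f(\phi_k) \geq 0$ and $\sum_k \varepsilon_k^2 < \infty$, the Robbins-Siegmund martingale convergence theorem implies $V(\phi_k)$ converges almost surely and $\sum_k \varepsilon_k (\phi_k - \phi^*) f(\phi_k) < \infty$ almost surely. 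Combined with $\sum_k \varepsilon_k = \infty$, this forces $\liminf_k (\phi_k - \phi^*) f(\phi_k) = 0$ a.s.; continuity and strict monotonicity of $f$ then identify the almost-sure limit of $V(\phi_k)$ as $0$, giving $\phi_k \rightarrow \phi^*$ almost surely, and hence in probability.

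The main obstacle is that $f$ is only weakly monotonic without the distinguishability hypothesis; in that degenerate regime, $\alpharetraceContract_\nu$ may have plateaus, so the zero set of $f$ could be an interval rather than a point. Extracting a unique limit then requires either invoking a distinguishability assumption (as in Proposition \ref{prop:contractionrate}) or reinterpreting the convergence statement as convergence of $\alpharetraceContract_\nu(\alpha(\phi_k))$ to $\Gamma$. The other technical subtlety is bounding $\hat{\alpharetraceContract}^{(k)}$ uniformly in $\phi$, but the explicit form of \eqref{eq:retracealphacontraction} makes this routine once the infinite sum is handled (e.g., by random geometric truncation, or by noting the sum converges almost surely since $\gamma<1$).
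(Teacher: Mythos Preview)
Your proposal is correct and follows the same high-level approach as the paper: recognize the update as a Robbins--Monro scheme for the root of $\phi \mapsto \alpharetraceContract_\nu(\alpha(\phi)) - \Gamma$, check continuity and monotonicity via Proposition~\ref{prop:contractionrate}, note that the estimators $\hat{\alpharetraceContract}^{(k)}$ are unbiased and bounded, and conclude. The paper is briefer, simply invoking Theorem~2 of \citet{robbins1951} once these conditions are verified; you instead carry out the convergence argument explicitly with a quadratic Lyapunov function and the Robbins--Siegmund lemma, which in fact yields almost-sure convergence (slightly stronger than the stated in-probability conclusion). One useful point the paper makes that addresses your stated obstacle: rather than treating the non-distinguishable case as a separate degenerate regime requiring an extra assumption, the paper observes that the hypothesis $\alpharetraceContract_\nu(1) \geq \Gamma$ itself forces strict monotonicity (since $\alpharetraceContract_\nu(0) = 0$, the map cannot be constant when $\Gamma > 0$), so uniqueness of $\phi^*$ follows directly without invoking distinguishability separately.
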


\ctrace\ thus consists of interleaving \alpharetrace\ evaluation updates with $\alpha$ parameter updates as in \eqref{eq:SAupdate}.

\textbf{Convergence analysis.} It is possible to further develop the theory in Proposition~\ref{prop:sa} to prove convergence of C-trace as a whole, using techniques going back to those of \citet{bertsekas1996neuro} for convergence of TD($\lambda$), and more recently used by \citet{munos2016safe} to prove convergence of a control version of Retrace, as the following result shows.

\begin{restatable}{theorem}{ThmCtraceConvergence}
    Assume the same conditions as Proposition 3.4, and additionally that: (i) trajectory lengths have finite second moment; (ii) immediate rewards are bounded.
    Let $(\phi_k)_{k=0}^\infty$ be defined as in Equation~\eqref{eq:SAupdate} and $(Q_k)_{k=0}^\infty$ be a sequence of Q-functions, with $Q_{k+1}$ obtained from applying Retrace updates targeting $\alpha(\phi_k)\pi + (1-\alpha(\phi_k))\mu$ to $Q_k$ with trajectory $k+1$, using stepsize $\varepsilon_k$. 
    Then we have $\alpha(\phi_k) \rightarrow \alpha(\phi^*) =: \alpha^*$ and $Q_k \rightarrow Q^{\alpha^* \pi + (1-\alpha^*)\mu}$ almost surely.
\end{restatable}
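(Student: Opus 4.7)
The plan is to decouple the dynamics of $(\phi_k)_{k \geq 0}$ from those of $(Q_k)_{k \geq 0}$. Since the update \eqref{eq:SAupdate} depends only on the current trajectory and not on $Q_k$, I would first strengthen Proposition~\ref{prop:sa} from convergence in probability to almost-sure convergence $\phi_k \to \phi^*$. The estimator $\hat{\alpharetraceContract}^{(k)}(\alpha(\phi))$ is a sum of products of truncated importance weights scaled by a geometric series in $\gamma$, hence is uniformly bounded in $[0,1]$, so the martingale-difference noise $\hat{\alpharetraceContract}^{(k)}(\alpha(\phi_k)) - \alpharetraceContract_\nu(\alpha(\phi_k))$ has uniformly bounded variance. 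Combined with monotonicity and continuity of $\alpharetraceContract_\nu(\alpha(\cdot))$ (Proposition~\ref{prop:contractionrate}) and the assumed Robbins--Monro stepsize conditions, the Robbins--Siegmund supermartingale convergence lemma yields $\phi_k \to \phi^*$ almost surely.

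Next I would analyse the $Q$-update as an asynchronous stochastic approximation with a time-varying driving operator. Write $T^{(\phi)}$ for the Retrace evaluation operator under behaviour $\mu$ with target $\alpha(\phi)\pi + (1-\alpha(\phi))\mu$, and set $Q^* = Q^{\alpha^* \pi + (1-\alpha^*)\mu}$. By Proposition~\ref{prop:contractionrate}, each $T^{(\phi)}$ is a sup-norm contraction with modulus at most $\gamma$ whose unique fixed point is the true $Q$-function of the mixture policy. The $k$-th update can be written in standard form as $Q_{k+1}(x_0^{(k)}, a_0^{(k)}) = (1-\varepsilon_k)Q_k(x_0^{(k)}, a_0^{(k)}) + \varepsilon_k \bigl(T^{(\phi_k)} Q_k(x_0^{(k)}, a_0^{(k)}) + W_{k+1}\bigr)$, with $Q_{k+1}$ unchanged at every other state-action pair and $W_{k+1}$ a martingale-difference noise term. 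The two technical ingredients required are: (i) $\mathbb{E}[W_{k+1}^2 \mid \mathcal{F}_k]$ is bounded by an affine function of $\|Q_k\|_\infty^2$, which follows from bounded rewards and the assumed finite second moment of trajectory lengths (needed to control the sum of truncated TD errors inside Retrace); and (ii) $T^{(\phi_k)} Q \to T^{(\phi^*)} Q$ almost surely for every $Q$, which follows from the almost-sure convergence of $\phi_k$ to $\phi^*$ together with continuous dependence of the Retrace operator on $\alpha$.

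I would finish the argument via the splitting $T^{(\phi_k)} Q_k - Q^* = (T^{(\phi_k)} Q_k - T^{(\phi_k)} Q^*) + (T^{(\phi_k)} Q^* - Q^*)$: the first summand has sup-norm bounded by $\gamma \|Q_k - Q^*\|_\infty$, while the second tends to zero almost surely by (ii) together with $T^{(\phi^*)} Q^* = Q^*$. This is precisely the structure handled by the asynchronous stochastic-approximation theorem for sup-norm contractions with vanishing perturbation --- Proposition~4.5 of \citet{bertsekas1996neuro} as invoked by \citet{munos2016safe} for fixed-target Retrace --- which then yields $Q_k \to Q^*$ almost surely.

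The main obstacle is the time-varying nature of the operator: whereas the argument for fixed-target Retrace applies the same $\gamma$-contraction at every iterate, here the operator is perturbed at each step and the perturbation term $T^{(\phi_k)} Q^* - Q^*$ must be controlled uniformly in $k$. The cleanest way to handle this is to first establish a uniform almost-sure bound on $\|Q_k\|_\infty$ (using bounded rewards together with $\sum_{k} \varepsilon_k^2 < \infty$ and the contraction of each $T^{(\phi)}$ towards a uniformly bounded fixed point), which then allows the perturbation to be upper bounded by a product of a vanishing deterministic sequence and a bounded random quantity, so that the contraction-SA theorem applies on the tail of the iteration beyond any fixed burn-in time.
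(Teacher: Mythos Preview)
Your proposal is correct and follows essentially the same route as the paper: upgrade Proposition~\ref{prop:sa} to almost-sure convergence of $\phi_k$, then cast the $Q$-iteration as an asynchronous stochastic approximation driven by a sup-norm contraction plus martingale noise plus a vanishing perturbation (coming from $\alpha_k \to \alpha^*$), and appeal to Proposition~4.5 of \citet{bertsekas1996neuro}. The only cosmetic difference is in how the perturbation is isolated---the paper writes it as $\mathcal{R}^{\alpha_k} Q_k - \mathcal{R}^{\alpha^*} Q_k$ (evaluated at the current iterate, so that the fixed operator $\mathcal{R}^{\alpha^*}$ appears explicitly in the ``desired update''), whereas you evaluate it at the fixed point as $T^{(\phi_k)} Q^* - Q^*$; both yield the same pseudo-contraction structure required by the Bertsekas--Tsitsiklis result.
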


\textbf{Truncated trajectory corrections.} The method described above for adaptation of $\alpha$ is impractical in scenarios where episodes are particularly long, when the MDP is non-episodic, and when only partial segments of trajectories can be processed at once. Since such cases often arise in practice, this motivates modifications to the update of \eqref{eq:SAupdate}. Here, we describe one such modification which will be crucial to the deployment of \ctrace\ in large-scale agents in Section \ref{sec:large-experiments}. Given a \emph{truncated trajectory} $(x_t, a_t, r_t)_{t=0}^N$, Retrace necessarily must cut traces after at most $N$ time steps, and so can achieve a contraction rate of $\gamma^N$ at the very lowest. We thus adjust the target contraction rate accordingly, and arrive at the following update:
\begin{align}
    \phi_{k+1} = \phi_k - 
    \varepsilon_k \left( \hat{\alpharetraceContract}^{(k)}(\alpha(\phi_k)) - \max(\Gamma, \gamma^N) \right) \, .
\end{align}

\section{Experiments}\label{sec:experiments}\label{sec:large-experiments}

\begin{figure*}
    \centering
    \null\hfill
    \includegraphics[keepaspectratio,height=.17\textheight]{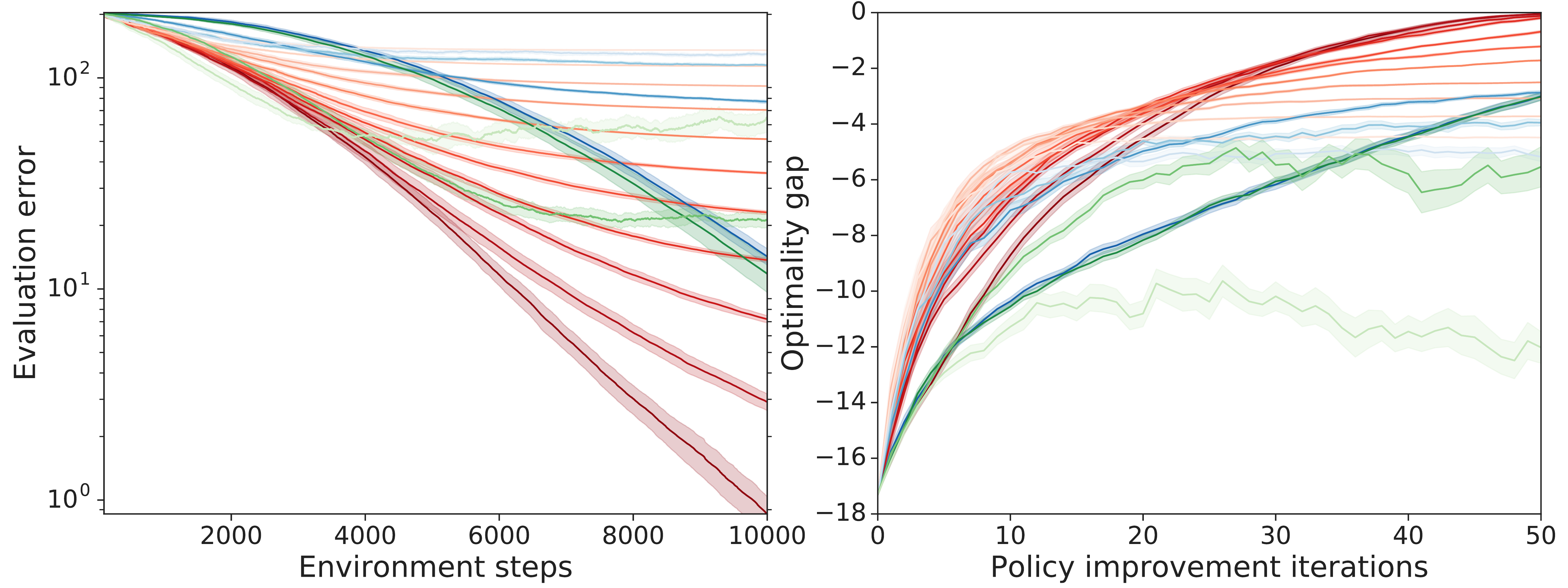}
    \includegraphics[keepaspectratio,width=.25\textwidth]{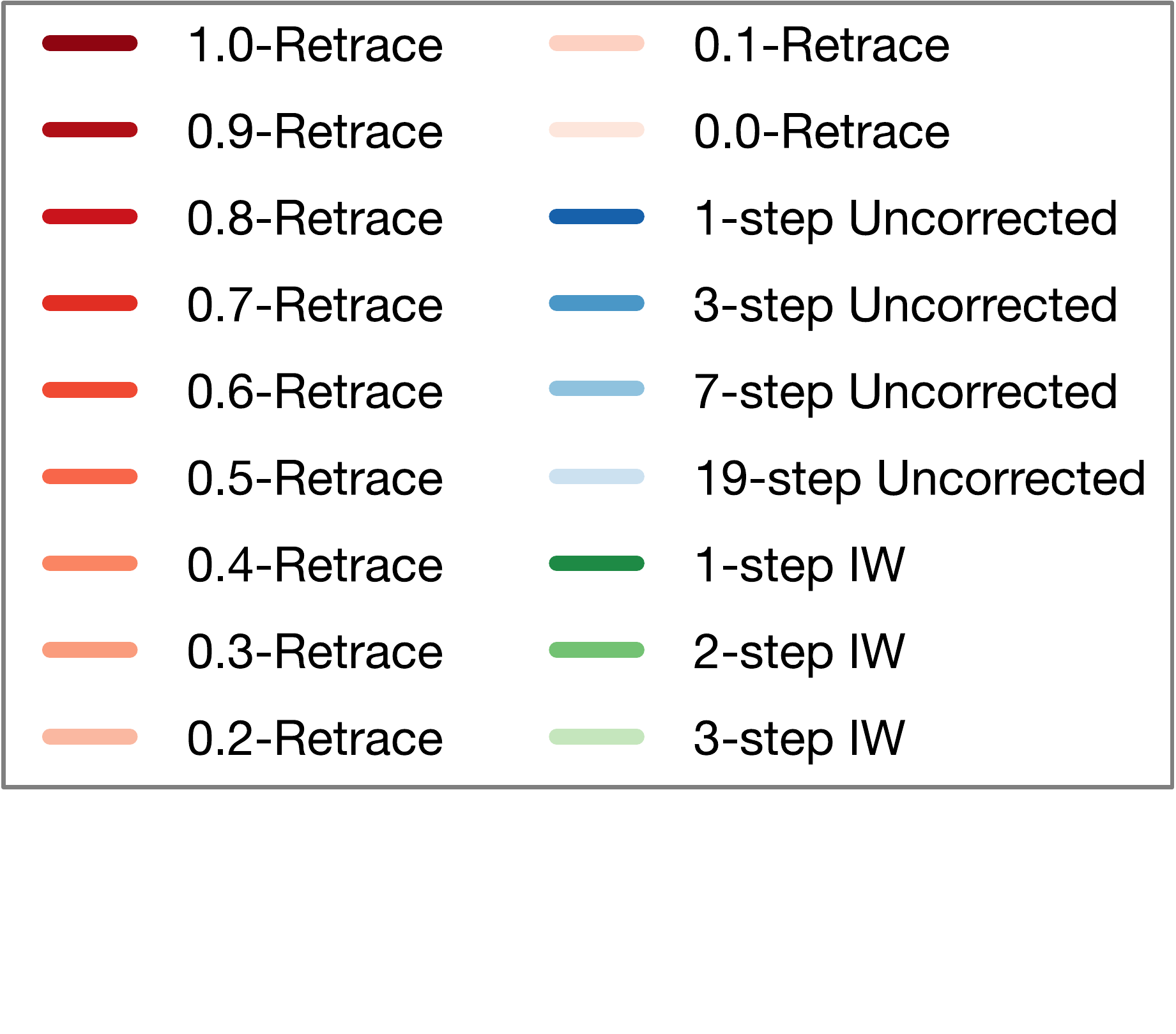}
    \hfill\null
    \caption{Left: Performance of a variety of off-policy evaluation methods on a small MDP; for further details, see text of Section \ref{sec:evalexperiments}.
    Right: Performance of a variety of modified policy iteration methods on a small MDP; for further details, see text of Section \ref{sec:controlexperiments}.}
    \label{fig:eval_control_plot}
\end{figure*}

Having explored the types of trade-offs \alpharetrace\ makes relative to existing off-policy algorithms, we now investigate the performance of these methods in the downstream tasks of evaluation and control described in Section \ref{sec:downstreamtasks}.

\textbf{Evaluation. }\label{sec:evalexperiments}
In the left sub-plot of Figure \ref{fig:eval_control_plot}, we compare the performance of \alpharetrace, $n$-step uncorrected updates, and $n$-step importance-weighted updates, for various values of the parameters concerned, at an off-policy evaluation task. In this particular task, the environment is given by a chain MDP (see Appendix Section \ref{sec:envs}), the target policy is optimal, and the behaviour is the uniform policy. We plot Q-function $L^2$ error against number of environment steps; see full details in Appendix Section \ref{sec:mainplotsmoredetails}. Standard error is indicated by the shaded regions.

The best performing methods vary as a function of the number of environment steps experienced. For low numbers of environment steps, the best performing methods are $n$-step uncorrected updates for large $n$, and \alpharetrace\ for $\alpha$ close to $0$. Intuitively, in this regime, a good contraction rate outweighs fixed-point bias. As the number of environment steps increases, the fixed-point bias kicks in, and the optimally-performing $\alpha$ gradually increases from close to $0$ to close to $1$. Note that typically the high variance of the importance-weighted updates preclude them from attaining any reasonable level of evaluation error.

\textbf{Control. }\label{sec:controlexperiments}
In the right sub-plot of Figure \ref{fig:eval_control_plot}, we compare the performance of a variety of modified policy iteration methods, each using a different off-policy evaluation method. We use the same MDP as in the evaluation example above, and plot the sub-optimality of the learned policy (measured as difference between expected return under a uniformly random initial state for optimal and learned policies) against the number of policy improvement steps performed. In this experiment, the behaviour policy is fixed as uniform throughout.
As with evaluation, we see that initial improvements in policy are strongest with highly-contractive methods incorporating some fixed-point bias, with less-biased approaches catching up (and ultimately surpassing) for greater amounts of environment interaction.

\subsection{\ctrace-R2D2}

To test the performance of our methods at scale, we adapted R2D2 \citep{kapturowski2018recurrent} to replace the original $n$-step uncorrected update with Retrace and \ctrace. For \ctrace\ we targeted the contraction rate given by an $n$-step uncorrected update, using a discount rate of $\gamma=0.997$ and $n=10$. Based on the Pareto efficiency of \alpharetrace\ relative to $n$-step uncorrected returns
exhibited empirically in small-scale MDPs, we conjectured that this should lead to improved performance. The agent was trained on the Atari-57 suite of environments \citep{bellemare13arcade} with the same experimental setup as in \citep{kapturowski2018recurrent}, a description of which we include in Appendix Section \ref{sec:r2d2details}. High-level results are displayed in Figure \ref{fig:r2d2results}, plotting mean human-normalised performance, median human-normalised performance, and mean human-gap (across the 57 games) against wall-clock training time; detailed results are given in Appendix Section \ref{sec:r2d2detailedresults}. We also provide empirical verification that C-trace-R2D2 successfully attains its target contraction rate in practice in Appendix Section~\ref{sec:contraction-verification}.

\ctrace-R2D2 attains comparable or superior performance relative to R2D2 and Retrace-R2D2 in all three performance measures. Thus, not only does C-trace-R2D2 match state-of-the-art performance for distributed value-based agents on Atari, it also achieves the earlier stated goal of bridging the gap between the performance of uncorrected returns and more principled off-policy algorithms in deep reinforcement learning.

\subsection{\ctrace-DQN}

\begin{figure*}
	\centering
	\includegraphics[keepaspectratio, width=.85\textwidth]{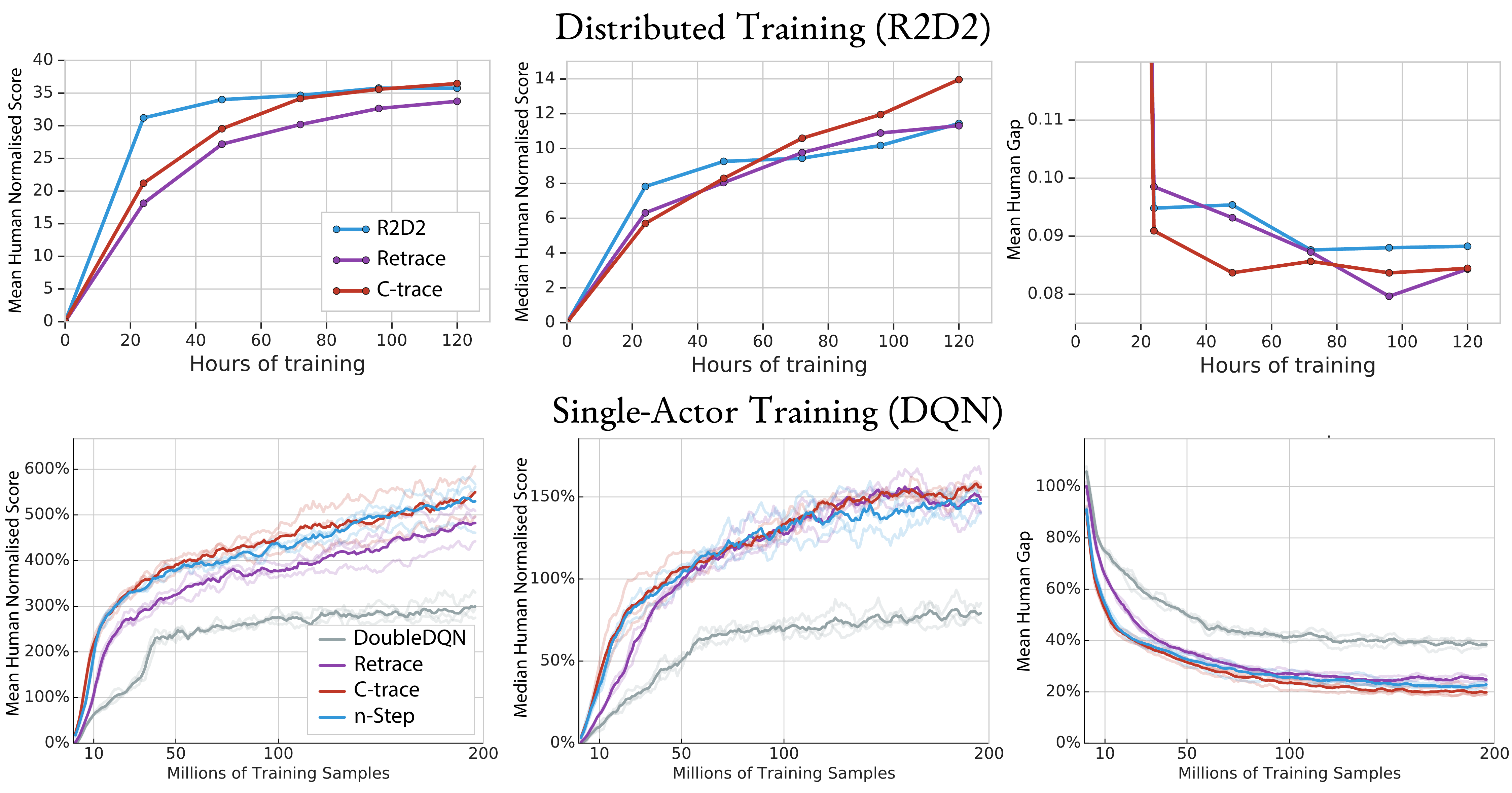}
\caption{High-level performance of variants of R2D2 (top row) and DQN (bottom row) on the Atari suite of environments. R2D2-based methods are averages of two seeds. DQN-based methods are averages of three seeds. (\textbf{Left}) Mean human-normalised score, (\textbf{Centre}) median human-normalised score, and (\textbf{Right}) human gap.}
	\label{fig:r2d2results}
\end{figure*}

To illustrate the flexibility of \ctrace\ as an off-policy learning algorithm, we also demonstrate its performance within a DQN architecture \citep{mnih2015human}. We use Double DQN \citep{van2016deep} as a baseline, and modify the one-step Q-learning rule to use $n$-step uncorrected returns, Retrace, and \ctrace. As for the R2D2 experiment, we set the C-trace contraction target using $n=10$, demonstrating the robustness of this C-trace hyperparameter across different architectures. Further, we found the behaviour of C-trace to be generally robust to the choice of $n$; see Appendix Section~\ref{sec:dqndetailedresults}. Full experimental specifications are given in Appendix Section \ref{sec:dqndetails}, with detailed results in Appendix Section \ref{sec:dqndetailedresults}; a high-level summary is displayed in Figure \ref{fig:r2d2results}. All sequence-based methods significantly outperform Double DQN, as we would expect. We notice that the performance gap between $n$-step and Retrace is not as large here as for R2D2. A possible explanation for this is that the data distribution used by DQN is typically ``more off-policy'' than R2D2, as the latter uses a distributed set of actors to increase data throughput.
As with the R2D2 experiments we see that \ctrace-DQN achieves similar learning speed as the targeted $n$-step update, but with improved final performance. One interpretation of these results is that the improved contraction rate of \ctrace\ allows it to learn significantly faster than Retrace, while the better fixed-point error allows it to find a better long-term solution than $n$-step uncorrected.

\section{Related work}\label{sec:relatedwork}

A central observation of this work is that the fixed-point bias can be explicitly traded-off to improve contraction rates. To our understanding, this is the first work to directly study this possibility, and further to draw attention to three fundamental quantities to be traded-off in off-policy learning. However, investigating trade-offs in off-policy RL, and in particular parametrising methods to allow a spectrum of algorithms is a long-standing research topic \citep{sutton2018reinforcement}. 
The most closely related methods come from a line of work that consider the bias-variance trade-off due to bootstrapping. In our framework, we understand this as a trade-off between variance and contraction rate, \textit{but without modifying the fixed-point}. The recently introduced $Q(\sigma)$ algorithm uses the $\sigma$ hyperparameter to mix between importance-weighted $n$-step SARSA and TreeBackup \citep{de2018multi}. In another recent related approach, \citet{shi2019tbq} uses $\sigma$ to mix between TreeBackup$(\lambda)$ and $Q(\lambda)$, although neither of these approaches adaptively set $\sigma$ based on observed data.
We have developed an adaptive method for adjusting $\alpha$ to achieve a desired trace length, and believe an interesting direction for future work would be to develop the adaptive methods described in this paper for use in other families of off-policy learning algorithms.

Conservatively updating policies within control algorithms is a well-established practice; \citet{kakade2002approximately} consider a trust-region method for policy improvement, motivated by inexact policy evaluation due to function approximation. In contrast, in this work we consider regularised policy improvement as a means of improving evaluation of future policies, even in the absence of function approximation. More recently, this approach also led to several advances in policy gradient methods \citep{schulman2015trust,schulman2017proximal} based on trust regions.
Although not the focus of this work, there has been also been much progress on correcting state-visitation distributions \citep{sutton2016emphatic,thomas2016data,hallak2017consistent,liu2018breaking}, another form of off-policy correction important in function approximation, as illustrated in the classic counterexample of \citet{baird1995residual}.

\section{Conclusion}\label{sec:conclusion}

We have highlighted the fundamental role of variance, fixed-point bias, and contraction rate in off-policy learning, and described how existing methods trade off these quantities. With this perspective, we developed novel off-policy learning methods, \alpharetrace\ and \ctrace, and incorporated the latter into several deep RL agents, leading to strong empirical performance.

Interesting questions for future work include applying the adaptive ideas underlying C-trace to other families of off-policy algorithms, investigating whether there exist new off-policy learning algorithms in unexplored areas of the space of trade-offs, and developing a deeper understanding of the relationship between these fundamental properties of off-policy learning algorithms and downstream performance on large-scale control tasks.


\section*{Acknowledgements}

Thanks in particular to Hado van Hasselt for detailed comments and suggestions on an earlier version of this paper, and thanks to
Bernardo Avila Pires,
Diana Borsa, 
Steven Kapturowski,
Bilal Piot, 
Tom Schaul,
and Yunhao Tang
for interesting conversations during the course of this work. Thanks also to the anonymous reviewers for helpful comments during the review process.

\bibliographystyle{plainnat}
\bibliography{main}

\clearpage
\onecolumn
\appendix
\section*{\centering APPENDICES: Adaptive Trade-Offs in Off-Policy Learning}

\section{Proofs}

\propDecomposition*

\begin{proof}
    
    Note that by the triangle inequality:
    \begin{align*}
        \mathbb{E}\left\lbrack \| \hat{T}Q - Q^\pi \|_\infty \right\rbrack \leq  \mathbb{E}\left\lbrack \| \hat{T}Q - \widetilde{T}Q \|_\infty \right\rbrack + \| \widetilde{T}Q - \widetilde{Q} \|_\infty + \| \widetilde{Q} - Q^\pi \|_\infty   \, .
    \end{align*}
    Now, observing $\|\widetilde{T}Q - \widetilde{Q}\|_\infty = \|\widetilde{T}Q - \widetilde{T}\widetilde{Q}\|_\infty \leq \Gamma \|Q - \widetilde{Q}\|_\infty$ yields the second term on the right-hand side of the stated bound. Using the inequality $\|\cdot\|_\infty \leq \|\cdot\|_2$ and Jensen's inequality yields the remaining terms.
\end{proof}

\thmLowerBound*

\begin{proof}
    The high-level approach to the proof is to exhibit two MDPs $M_0, M_1 \in \mathcal{M}$ which with high probability under the data used by $\hat{T}$, cannot be distinguished. This yields a high probability lower bound on the evaluation error that the operator $\hat{T}$ achieves on the two MDPs. This in turn implies that the mean-squared error quantity of Proposition \ref{prop:decomposition} cannot be uniformly low across $M_0$ and $M_1$, and this yields a lower bound for the quantity on the right-hand side of the bound appearing in Proposition \ref{prop:decomposition}, as required.
    
    Using the notation introduced in the statement of the theorem, for a given $\delta \in (0,1)$, let $(x^*, a^*) \in \statespace\times\actionspace$, $\Omega^* \subseteq \statespace$ be quantities achieving the maximum in the definition of $D(Z, \pi, \delta)$. Thus, with probability at least $1-\delta$, none of the state-action pairs used by the algorithm $\hat{T}$ are contained in $\Omega^*$. 
        
    Now define two MDPs $M_0$, $M_1$ with common state space $\statespace$, action space $\actionspace$, transition kernel $P$, and discount factor $\gamma$, with reward functions $r_0, r_1 : \statespace\times\actionspace \rightarrow \mathbb{R}$ defined by
    \begin{align*}r_i(x, a) =
        \begin{cases}
            0 &  \text{ if } x \not\in \Omega^* \\
            (-1)^i R_\text{max} &  \text{ if } x \in \Omega^* \, .
        \end{cases}
    \end{align*}
    Now, the Q-functions associated with these two MDPs can be calculated by $(I - \gamma P^\pi)^{-1} r_0$ and $(I - \gamma P^\pi)^{-1} r_1$, and so we can read off their difference in the $(x^*, a^*)$ coordinate as
    \begin{align*}
        \sum_{(x, a) \in \Omega^*} \frac{1}{1-\gamma} d_{(x^*, a^*), \pi}(x, a)2 R_\text{max} = \frac{2D(Z, \pi, \delta) R_\text{max}}{1-\gamma} \, .
    \end{align*}
    Thus, with probability $1-\delta$, the algorithm must make an error of at least $D(Z, \pi, \delta)R_\text{max}/(1-\gamma)$ on one of the MDPs $M_0$ and $M_1$, as measured the $L^\infty$ norm. This implies that the expected $L^\infty$ error appearing on the left-hand side of the bound in Proposition \ref{prop:decomposition} is at least $(1-\delta)D(Z, \pi, \delta)R_\text{max}/(1-\gamma)$ for one of the MDPs $M_0$ and $M_1$. Thus, the statement of the theorem follows by taking a supremum over $\delta \in (0,1)$.
\end{proof}

\propContractionRate*

\begin{proof}
    The \alpharetrace\ operator for evaluation of $\pi$ given behaviour $\mu$ corresponds to the standard Retrace operator for evaluation of $\pi^\alpha = \alpha\pi + (1-\alpha)\mu$ given behaviour $\mu$. Thus, from the analysis of \citet{munos2016safe}, the contraction rate of the \alpharetrace\ operator specific to a particular state-action pair $(x, a) \in \statespace\times\actionspace$ may be immediately written down as
    \begin{align*}
        & 1 - (1-\gamma) \mathbb{E}_\mu\left\lbrack 
            \sum_{t\geq0} \gamma^t \prod_{s=1}^t \min\left(1, \frac{\alpha\pi(A_t|X_t) + (1-\alpha)\mu(A_t|X_t)}{\mu(A_t|X_t)} \right)
        \middle| X_0 = x, A_0 = a \right\rbrack \\
        = & 1 - (1-\gamma) \mathbb{E}_\mu\left\lbrack 
            \sum_{t\geq0} \gamma^t \prod_{s=1}^t \left( (1-\alpha) + \alpha \min\left(1, \frac{\pi(A_t|X_t)}{\mu(A_t|X_t)} \right)\right)
        \middle| X_0 = x, A_0 = a \right\rbrack \, .
    \end{align*}
    To see that this is a continuous function of $\alpha \in [0,1]$, we note that the integrand of the expectation above is clearly a continuous function of $\alpha$, and is uniformly dominated by the constant function equal to $(1-\gamma)^{-1}$. By the dominated convergence theorem, continuity of the above expression follows. Since the integrand is non-negative and bounded above by $(1-\gamma)^{-1}$, the contraction rate must lie in the interval $[0, \gamma]$ for all $\alpha \in [0,1]$. For monotonicity, we show that each term
    \begin{align}\label{eq:individterm}
        \mathbb{E}_\mu\left\lbrack 
            \prod_{s=1}^t \left( (1-\alpha) + \alpha \min\left(1, \frac{\pi(A_t|X_t)}{\mu(A_t|X_t)} \right)\right)
            \middle| X_0 = x, A_0 = a
        \right\rbrack
    \end{align}
    is monotonic decreasing in $\alpha$, meaning that the contraction rate is monotonic increasing in $\alpha$. To this end, observe that the integrand of the expectation above almost-surely takes the form $\prod_{s=1}^t (1 - z_s \alpha)$ for some coefficients $z_s \in [0,1]$. The derivative with respect to $\alpha$ of this expression is $\sum_{s=1}^t -z_s \prod_{s^\prime\not= s}^t (1 - z_{s^\prime} \alpha)$, which is non-positive for $\alpha \in [0,1]$. It is again straightforward to apply the dominated convergence theorem to this derivative to obtain that the derivative of Expression \eqref{eq:individterm} is non-positive for all $\alpha \in [0,1]$, and we thus obtain monotonicity as required. Finally, for strict monotonicity, note that if $\pi$ and $\mu$ are \emph{not} distinguishable under $\mu$, then all truncated importance weights in the expressions above are equal to $1$ almost-surely under the distribution over states visited when following $\mu$. Hence, the contraction rate is in fact constant as a function of $\alpha$, and we therefore do not have strict monotonicity. On the other hand, if $\pi$ and $\mu$ are $(x, a)$-distinguishable under $\mu$, then there exists a $t \in \mathbb{N}$ such that in the integrand of the expectation in Expression \eqref{eq:individterm}, in one of the terms constituting the product, the coefficient of $\alpha$ is less than $0$ with positive probability. Thus, the integrand is strictly monotonic with positive probability, and hence Expression \eqref{eq:individterm} itself is strictly monotonic, proving strict monotonicity of the contraction rate, as required.
\end{proof}

\thmbettercontraction*

\begin{proof}
    That the greedy policies coincide follows as a consequence of the continuity of $Q^\nu$ with respect to the policy $\nu$ and the positivity of the minimum action gap $\Delta = \inf_{x\in\statespace, a \not= a^*(x)} (Q^\pi(x, a^*(x)) - Q^\pi(x, a))$; $\alpha$ may be selected so that e.g. $\|Q^{\pi^\alpha} - Q^\pi \|_\infty \leq \Delta/2$. The contraction result follows from the monotonicity property derived in Proposition \ref{prop:contractionrate}.
\end{proof}

\propSA*

\begin{proof}
    The proof follows from an application of standard stochastic approximation theory to the solution of the root-finding problem $\alpharetraceContract_\nu(\alpha(\phi)) = \Gamma$. Firstly, by Proposition \ref{prop:contractionrate}, the function $\phi \mapsto \alpharetraceContract_\nu(\alpha(\phi))$ is continuous and monotonic on $\mathbb{R}$. By the assumption that $\alpharetraceContract_\nu(1) \geq \Gamma$, it follows that $\phi \mapsto \alpharetraceContract_\nu(\alpha(\phi))$ is strictly monotonic, and moreover by inspecting the proof of Proposition \ref{prop:contractionrate}, has positive derivative everywhere. By the intermediate value theorem, there exists a unique value $\phi^* \in \mathbb{R}$ such that $\alpharetraceContract_\nu(\alpha(\phi^*)) = \Gamma$.
    
    Now note that for each $\phi \in \mathbb{R}$, the random variables $\hat{\alpharetraceContract}^{(k)}(\alpha(\phi))$ are i.i.d. unbiased, bounded estimators of $\alpharetraceContract_\nu(\alpha(\phi))$. 
    Thus, the scheme \eqref{eq:SAupdate} is a standard stochastic approximation scheme for the the root of a monotonic function, and the conditions of Theorem 2 of \citep{robbins1951} are satisfied, enabling us to conclude that $\phi_t \rightarrow \phi^*$ in probability, as required. 
\end{proof}

\ThmCtraceConvergence*

\begin{proof}
    Convergence in probability of $\alpha_k := \alpha(\phi_k)$ to $\alpha^*$ has been shown in Proposition 3.4; it is straightforward to upgrade this to almost-sure convergence using standard stochastic approximation theory. The intuition for the remainder of the proof is that when $\alpha_k$ is close to $\alpha^*$, the C-trace updates are close to those of standard Retrace targeting the policy $\alpha^* \pi + (1-\alpha^*)\mu$, which are known to converge under the conditions of the theorem. This is made rigorous by decomposing the update on the Q-function from the $(k+1)$\textsuperscript{th} trajectory as
    \begin{align*}
        Q_{k+1}  =  \overbrace{(\mathbf{1} - \widetilde{\varepsilon}_k) \odot Q_k  +  \widetilde{\varepsilon}_k \odot \mathcal{R}^{\alpha^*} Q_k}^{\text{Desired update}} + \overbrace{(Q_{k+1} - (\mathbf{1}-\widetilde{\varepsilon}_k) \odot Q_k - \widetilde{\varepsilon}_k \odot \mathcal{R}^{\alpha_k} Q_k)}^{\text{Martingale noise}} + \widetilde{\varepsilon}_k\odot  \overbrace{(\mathcal{R}^{\alpha_k} Q_k - \mathcal{R}^{\alpha^*} Q_k)}^{\text{Perturbation}}\, ,
    \end{align*}
    where $\mathcal{R}^\alpha$ denotes the Retrace operator targeting $\alpha\pi + (1-\alpha)\mu$, 
    and with $\widetilde{\varepsilon}_k(x, a) = \varepsilon_k \mathbb{E}[\sum_{t} \mathbbm{1}_{(x_t,a_t)=(x, a)}|(x_0,a_0)=(x,a)]$, and $\odot$ the Hadamard product and $\mathbf{1}$ the vector of 1's. It is then possible to appeal to Proposition 4.5 of \citet{bertsekas1996neuro} that $Q_k \rightarrow Q^{\alpha^*\pi + (1-\alpha^*)\mu}$ almost surely, using the assumptions of theorem.
\end{proof}

\section{Additional results}\label{sec:appendixMoreResults}

\subsection{Operators for time-inhomogeneous policies}

In this section, we provide a result which rigorously proves the connection between the $n$-step uncorrected target and the time-inhomogeneous policy mentioned in Section \ref{sec:contractionbiasvariance}.

\begin{proposition}\label{prop:fixedpoints}
    The $n$-step uncorrected update corresponding to the target
    \begin{align*}
        \sum_{s=0}^{n-1} \gamma^{s} r_{s} + \gamma^n \mathbb{E}_{A \sim \pi(\cdot|x_{n})}\left\lbrack \hat{Q}(x_{n}, A)\right\rbrack \, ,
    \end{align*}
    with the trajectory generated under $\mu$, is a stochastic approximation to the operator $(T^{\mu})^{n-1}T^\pi$, with fixed point given by the $Q$-function for the time-inhomogeneous policy which follows $\pi$ at timesteps $t$ satisfying $t \equiv n-1 \text{ mod } n$, and $\mu$ otherwise.
\end{proposition}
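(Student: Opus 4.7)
The plan is to establish the two claims of the proposition separately: first, that the $n$-step uncorrected target, viewed as a function of the $\mu$-trajectory, has expectation $((T^\mu)^{n-1}T^\pi \hat Q)(x,a)$; second, that the unique fixed point of this operator is the $Q$-function of the stated time-inhomogeneous policy.

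For the first claim, I would directly compute $\mathbb{E}_\mu[\,\cdot\,\mid X_0=x, A_0=a]$ applied to the target $\sum_{s=0}^{n-1}\gamma^s R_s + \gamma^n\,\mathbb{E}_{A\sim\pi(\cdot\mid X_n)}[\hat Q(X_n,A)]$. Using linearity of expectation and the tower property, each reward term becomes $\gamma^s\,\mathbb{E}[r(X_s,A_s)]$ with $A_1,\ldots,A_{n-1}$ sampled under $\mu$ along the trajectory, and the bootstrap term becomes $\gamma^n\,\mathbb{E}\!\left[\sum_a \pi(a\mid X_n)\hat Q(X_n,a)\right]$. On the other side, expanding $(T^\mu)^{n-1}T^\pi\hat Q$ directly from the definition produces a nested expectation in which the actions $A_1,\ldots,A_{n-1}$ are $\mu$-sampled and $A_n$ is $\pi$-sampled for the final bootstrap; matching the two expressions term-by-term gives the claim.

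For the second claim, I first note that $T^\mu$ and $T^\pi$ are each $\gamma$-contractions in $\|\cdot\|_\infty$, so $(T^\mu)^{n-1}T^\pi$ is a $\gamma^n$-contraction and admits a unique fixed point by Banach's theorem. Let $\pi^*$ denote the stated time-inhomogeneous policy (which cycles $n-1$ uses of $\mu$ and one use of $\pi$ with period $n$), and let $Q^{\pi^*}(x,a) = \mathbb{E}_{\pi^*}\!\left[\sum_{t\geq 0}\gamma^t R_t \mid X_0=x, A_0=a\right]$. The structural fact to exploit is the $n$-periodicity of $\pi^*$: conditioned on $(X_n, A_n)$, the shifted trajectory $(X_{n+t}, A_{n+t}, R_{n+t})_{t\geq 0}$ has the same law as a fresh trajectory generated under $\pi^*$ initialised at $(X_n, A_n)$. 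Splitting the geometric series at $t = n$ and applying this periodicity yields
\[
 Q^{\pi^*}(x,a) \;=\; \mathbb{E}\!\left[\sum_{s=0}^{n-1}\gamma^s R_s + \gamma^n Q^{\pi^*}(X_n, A_n) \,\Big|\, X_0 = x, A_0 = a\right],
\]
in which the first $n$ action samples follow exactly the $\mu,\ldots,\mu,\pi$ pattern identified in the operator expansion above. Hence $Q^{\pi^*}$ solves the fixed-point equation, and by uniqueness it is the fixed point.

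The main obstacle is essentially notational bookkeeping: carefully aligning the operator composition order in $(T^\mu)^{n-1}T^\pi$ with the periodic indexing of $\pi^*$ so that the $\mu$-sampled and $\pi$-sampled actions line up correctly in the Bellman identity. Once the indexing is set, the remaining ingredients---$\gamma$-contractivity of $T^\mu$ and $T^\pi$ in supremum norm, uniqueness of the Banach fixed point, and the Markov-plus-periodicity argument for shift-invariance of the $\pi^*$-trajectory law---are all standard.
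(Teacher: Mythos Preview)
Your proposal is correct and essentially matches the paper's argument. The only difference is in part one: the paper establishes $\mathbb{E}_\mu[\text{target}\mid X_0=x,A_0=a]=((T^\mu)^{n-1}T^\pi\hat Q)(x,a)$ by induction on $n$ (base case $n=1$, inductive step by conditioning on $(X_1,A_1)$), whereas you propose a direct term-by-term expansion; for the fixed-point claim, both you and the paper derive the $n$-step Bellman equation for the periodic policy and identify the right-hand side with the operator, with your explicit appeal to the $\gamma^n$-contractivity and Banach's theorem making the uniqueness step slightly more explicit than the paper's treatment.
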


\begin{proof}
    We begin by taking the expectation of the update target conditional on the initial state-action pair, and showing that it is equal to $((T^{\mu})^{n-1}T^\pi \hat{Q})(x_0, a_0)$. We proceed by induction. In the case $n=1$, the expectation of the update is given by
    \begin{align*}
        & \mathbb{E}_\mu\left\lbrack R(X_0, A_0) + \gamma \mathbb{E}_{A \sim \pi(\cdot|X_{1})}\left\lbrack \hat{Q}(X_{1}, A)\right\rbrack \middle|X_0 = x_0, A_0 = a_0 \right\rbrack \\
        = & r(x_0, a_0) + \sum_{x^\prime \in \statespace} P(x^\prime|x, a) \gamma \sum_{a^\prime \in \actionspace} \pi(a^\prime | x^\prime) \hat{Q}(x^\prime, a^\prime) \\
        = & (T^\pi \hat{Q})(x_0, a_0) \, ,
    \end{align*}
    as required. For the inductive step, we assume the result holds for some $n \geq 1$. Now observe that by conditioning on $(X_1, A_1)$, we have
    \begin{align*}
        & \mathbb{E}_\mu\left\lbrack \sum_{s=0}^{n} \gamma^{s} R(X_s, A_s) + \gamma^{n+1} \mathbb{E}_{A \sim \pi(\cdot|X{n})}\left\lbrack \hat{Q}(X_{n+1}, A)\right\rbrack \middle|X_0 = x_0, A_0 = a_0 \right\rbrack \\
        = & r(x_0, a_0) + \gamma \sum_{x_1 \in \statespace} P(x_1|x_0, a_0) \sum_{a_1 \in \actionspace} \mu(a_1|x_1)\times\\
        &\qquad\qquad\qquad\mathbb{E}\left\lbrack \sum_{s=1}^{n} \gamma^{s-1} R(X_s, A_s) + \gamma^{n+1} \mathbb{E}_{A \sim \pi(\cdot|X{n})}\left\lbrack \hat{Q}(X_{n+1}, A)\right\rbrack \middle| X_1 = x_1, A_1=a_1 \right\rbrack  \\
        \overset{(a)}{=} & r(x_0, a_0) + \gamma \sum_{x_1 \in \statespace} P(x_1|x_0, a_0) \sum_{a_1 \in \actionspace} \mu(a_1|x_1) ((T^\mu)^{n-1}T^\pi \hat{Q})(x_1, a_1) \\
        & = (T^\mu (T^\mu)^{n-1} T^\pi \hat{Q})(x_0, a_0) \\
        & = ((T^\mu)^{n} T^\pi \hat{Q})(x_0, a_0) \, ,
    \end{align*}
    as required, with (a) following from the induction hypothesis. Finally, for the interpretation of the fixed point of $(T^\mu)^{n-1}T^\pi$, observe that the time-inhomogeneous policy described in the statement of the proposition, which we denote $\pi\mu^{n-1}$ follows a stream of Markovian policies with period $n$, so it is possible to write down an $n$-step Bellman equation for its Q-function $Q^{\pi\mu^{n-1}}$. Doing so yields
    \begin{align*}
        Q^{\pi\mu^{n-1}}(x, a) & = \mathbb{E}_{\substack{A_{1:n-1} \sim \mu(\cdot|X_{1:n-1}) \\ A_n \sim \pi(\cdot|X_n)}}\left\lbrack \sum_{s=0}^{n-1} \gamma^s R(X_s, A_s) + \gamma^n Q^{\pi\mu^{n-1}}(X_n, A_n) \middle| X_0 = x, A_0 = a \right\rbrack \\
        & = \mathbb{E}_{\mu}\left\lbrack \sum_{s=0}^{n-1} \gamma^s R(X_s, A_s) + \gamma^n \mathbb{E}_{A_n \sim \pi(\cdot|X_n)}\left\lbrack Q^{\pi\mu^{n-1}}(X_n, A_n) \right\rbrack \middle| X_0 = x, A_0 = a \right\rbrack \, . 
    \end{align*}
    We recognise the right-hand side as the operator $(T^\mu)^{n-1}T^\pi$, and thus $Q^{\pi\mu^{n-1}}$ is the fixed point of this operator.
\end{proof}

\subsection{Further decompositions of evaluation error}

In addition to the decomposition given in Proposition \ref{prop:decomposition}, there are decompositions of evaluation error based on other norms that may be of interest. We state one such decomposition below, and also note that there is also scope to use different norms to define the fundamental traded-off quantities, such as using the $L^\infty$ norm to define an alternative notion of fixed-point bias, that lead to further decompositions.

\begin{restatable}{proposition}{propDecompositionTwo}\label{prop:decompositiontwo}
    Consider the task of evaluation of a policy $\pi$ under behaviour $\mu$, and consider an update rule $\hat{T}$ which stochastically approximates the application of an operator $\widetilde{T}$, with contraction rate $\Gamma$ and fixed point $\widetilde{Q}$, to an initial estimate $Q$. Then we have the following decomposition:
    \begin{align*}
        \mathbb{E}\left\lbrack \| \hat{T}Q - Q^\pi \|^2_2 \right\rbrack \leq 3\left\lbrack \underbrace{\mathbb{E}\left\lbrack \| \hat{T}Q - TQ \|^2_2 \right\rbrack}_{\text{Variance}} + \underbrace{\Gamma^2|\statespace||\actionspace|\| Q - \widetilde{Q} \|^2_\infty}_{\text{(Squared) contraction}} + \underbrace{\| \widetilde{Q} - Q^\pi \|^2_2}_{\text{(Squared) fixed-point bias}} \right\rbrack \, .
    \end{align*}
\end{restatable}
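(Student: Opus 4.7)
The plan is to follow the same three-term decomposition as in Proposition \ref{prop:decomposition}, but to work throughout in $L^2$ and use a conversion between $L^\infty$ and $L^2$ to handle the contraction term, since the contraction rate $\Gamma$ is defined with respect to $\|\cdot\|_\infty$.

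First I would apply the triangle inequality in $L^2$:
\begin{align*}
    \|\hat{T}Q - Q^\pi\|_2 \leq \|\hat{T}Q - \widetilde{T}Q\|_2 + \|\widetilde{T}Q - \widetilde{Q}\|_2 + \|\widetilde{Q} - Q^\pi\|_2 \, .
\end{align*}
Squaring and using the elementary inequality $(a+b+c)^2 \leq 3(a^2+b^2+c^2)$ produces three squared terms, matching the structure of the claimed bound up to the constant factor of $3$.

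Next, I would control each squared term separately. The first term $\|\hat{T}Q - \widetilde{T}Q\|_2^2$ is exactly the variance quantity once we take expectations. The third term $\|\widetilde{Q} - Q^\pi\|_2^2$ is the squared fixed-point bias and requires no manipulation. For the middle term, I would use the fixed-point property $\widetilde{T}\widetilde{Q} = \widetilde{Q}$ to rewrite $\|\widetilde{T}Q - \widetilde{Q}\|_2 = \|\widetilde{T}Q - \widetilde{T}\widetilde{Q}\|_2$. Since the contraction rate is stated in $L^\infty$, I would convert using the standard inequality $\|v\|_2 \leq \sqrt{|\statespace||\actionspace|}\,\|v\|_\infty$, obtaining
\begin{align*}
    \|\widetilde{T}Q - \widetilde{T}\widetilde{Q}\|_2^2 \leq |\statespace||\actionspace|\,\|\widetilde{T}Q - \widetilde{T}\widetilde{Q}\|_\infty^2 \leq |\statespace||\actionspace|\,\Gamma^2\,\|Q - \widetilde{Q}\|_\infty^2 \, ,
\end{align*}
which matches the stated contraction term exactly. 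Finally, taking expectations and using linearity yields the bound. This step is purely deterministic in $Q$ except for the variance term, so no subtle interchange is required.

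There is no real obstacle here; the only non-routine choice is how to handle the mismatch between the $L^2$ framing of the bound and the $L^\infty$ definition of $\Gamma$, which is why the contraction term picks up the factor $|\statespace||\actionspace|$. I would flag this factor as the cost of mixing norms, and note that an alternative version of the proposition in which contraction is measured directly in $L^2$ would avoid this factor but require a stronger hypothesis on $\widetilde{T}$.
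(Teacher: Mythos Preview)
Your proposal is correct and follows essentially the same approach as the paper: a three-term $L^2$ triangle inequality, the elementary bound $(a+b+c)^2 \leq 3(a^2+b^2+c^2)$ (which the paper calls a ``Cauchy--Schwarz-style argument''), and then the norm comparison $\|v\|_2 \leq \sqrt{|\statespace||\actionspace|}\,\|v\|_\infty$ combined with the $L^\infty$-contraction of $\widetilde{T}$ to handle the middle term. Your version is in fact slightly cleaner, since you state the sharp constant $\sqrt{|\statespace||\actionspace|}$ in the norm comparison explicitly, which is what is needed to recover the factor $|\statespace||\actionspace|$ after squaring.
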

\begin{proof}
    The inequality is obtained in a manner analogous to that of Proposition \ref{prop:decomposition}. First, a Cauchy-Schwarz-style argument yields
    \begin{align*}
        \mathbb{E}\left\lbrack \| \hat{T}Q - Q^\pi \|^2_2 \right\rbrack \leq 3\left\lbrack \mathbb{E}\left\lbrack \| \hat{T}Q - \widetilde{T}Q \|^2_2 \right\rbrack + \| \widetilde{T}Q - \widetilde{Q} \|^2_2 + \| \widetilde{Q} - Q^\pi \|^2_2 \right\rbrack \, .
    \end{align*}
    Then, the inequality $\|\cdot\|_2 \leq |\statespace||\actionspace| \|\cdot\|_\infty$ is applied, together with the definition of $T$ as a contraction mapping under $\|\cdot\|_{\infty}$ with contraction modulus $\Gamma$, to yield the statement.
\end{proof}
\section{Experimental details}\label{sec:extra-experiments}

\subsection{Environments}\label{sec:envs}

\textbf{Dirichlet-Uniform random MDPs. } These random MDPs are specified by two parameteters: the number of states, $n_s$, and the number of actions, $n_a$. Transition distributions $P(\cdot|x, a)$ are sampled i.i.d. from a Dirchlet($1,\ldots,1$) distribution for each $\statespace\times\actionspace$. Each immediate reward distribution is given by a Dirac delta, with locations drawn i.i.d. from the Uniform($[-1,1]$) distribution.

\textbf{Garnet MDPs. } Garnet MDPs \citep{archibald1995generation,piot2014difference,bhatnagar2009natural,geist2014off} are drawn from a distribution specified by three numbers: the number of states, $n_s$, the number of actions, $n_a$, and the \emph{branching factor}, $n_b$. Each transition distribution $P(\cdot|x, a)$ is given by $n_b^{-1}\sum_{i=1}^{n_b} \delta_{z_i(x, a)}$, where $z_{1:n_b}(x, a)$ are drawn uniformly without replacement from the set of states of the MDP, independently for each state-action pair $(x, a) \in \statespace\times\actionspace$. $\lfloor n_s/10 \rfloor$ states are selected uniformly without replacement, such that any transition out of these states yields a reward of $1$, whilst all other transitions in the MDP yield a reward of $0$.

\textbf{Chain MDP. } Our chain MDP is specified by a number of states $n_s$, identified with the set $\{1,\ldots,n_s\}$. State $n_s$ is terminal. Two actions, \texttt{left} and \texttt{right}, are available at each state, which deterministically move the agent into the corresponding state (taking action \texttt{left} in state $1$ causes the agent to remain in state $1$). Every transition caused by the action \texttt{right} incurs a reward of $-1$, unless the transition is into state $n_s$, in which case a reward of 50 is received. 

\subsection{Additional details for plots appearing in the main paper}\label{sec:mainplotsmoredetails}

\textbf{Figure \ref{fig:first-tradeoff}.} We use a Dirichlet-Uniform random MDP (see Section \ref{sec:envs}) with $5$ states and $3$ actions. The target $\pi$ and behaviour $\mu$ policies were sampled independently, so that each distribution $\pi(\cdot|x)$ and $\mu(\cdot|x)$ are independent draws from the Dirichlet($1,\ldots,1$) distribution. We use a discount rate of $0.9$, and a uniform initial state distribution. The variance variable is estimated by simulating $5000$ trajectories of length $100$, from an initial Q-function estimate set to $0$.

\textbf{Figure \ref{fig:eval_control_plot}.} In both tasks, the environment is the chain described in Section \ref{sec:envs} with $n_s=20$. In both tasks, all learning algorithms use a learning rate of $0.1$, and the discount factor is set to $0.9$ throughout. In the control task, policy improvement is interleaved with $100$ steps of environment experience, which are used by the relevant evaluation algorithm. All Retrace-derived methods use $\lambda = 1$. In both evaluation and control tasks, the experiments were repeated $200$ times to estimate the standard error by bootstrapping, which is indicated in the plots by the shaded regions.
\section{Further experimental results}\label{sec:furtherexperiments}

\subsection{Further trade-off plots}\label{sec:further-tradeoff}

In this section, we give several further examples of trade-offs made by off-policy algorithms. We begin by examining the trade-offs made by TreeBackup, for which the update target (for a target policy $\pi$ given a trajectory generated according to a behaviour policy $\mu$) is stated below for completeness.
\begin{align*}
    \hat{Q}(x_0, a_0) \!+\! \sum_{s \geq 0} \gamma^s\! \prod_{u=1}^s \pi(a_u|x_u) \left(r_{s} + \gamma \mathbb{E}_{A \sim \pi(\cdot|x_{s+1})}\!\left\lbrack \hat{Q}(x_{s+1}, A) \right\rbrack\! - \hat{Q}(x_{s}, a_{s}) \right) \, .
\end{align*}
We show that mixing in a proportion $1-\alpha$ of the behaviour policy into the target in TreeBackup (which we dub \alphatreebackup) leads to fundamentally different trade-off behaviour than in \alpharetrace; see Figure \ref{fig:pareto-treebackup}. As can be seen in the plot, mixing in the behaviour policy leads to limited improvements in contraction rate relative to the trade-off achieved by \alpharetrace, whilst incurring significant fixed-point bias.

\begin{figure}
    \centering
    \includegraphics[keepaspectratio,width=1.0\textwidth]{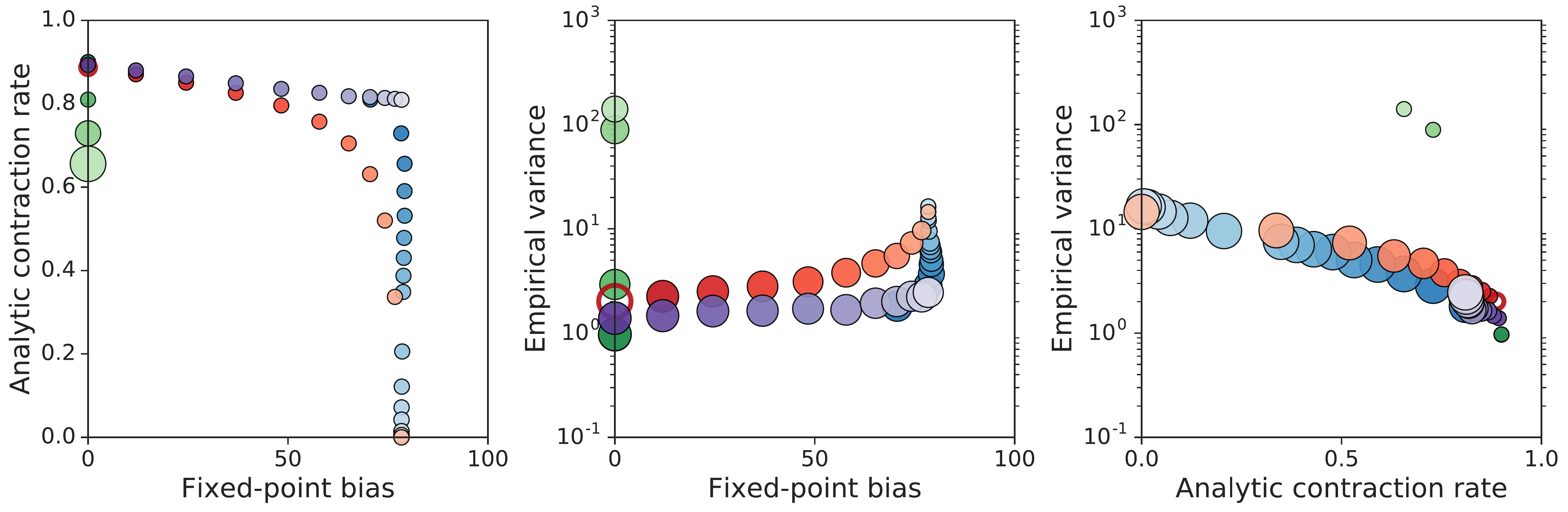}
    \caption{Trade-offs made by $n$-step uncorrected methods ($n=1$ (light blue) through to $n=50$ (dark blue), $n$-step importance-weighted methods ($n=1$ (dark green) through to $n=4$ (dark green), \alpharetrace\ ($\alpha=1$ (dark red) through to $\alpha=0$ (light red)), and \alphatreebackup\ ($\alpha=1$ (dark purple) through to $\alpha=0$ (light purple)). Results are shown for the chain environment, and evaluation of a Dirichlet($1,\ldots,1$) policy under behaviour generated by an independently sampled Dirichlet($1,\ldots,1$) policy.}
    \label{fig:pareto-treebackup}
\end{figure}

We next demonstrate the robustness of the behaviour exhibited in Figure~\ref{fig:first-tradeoff} in a variety of environments, and with a variety of target/behaviour policy pairings. As in Figure \ref{fig:first-tradeoff}, \alpharetrace\ is illustrated in red, with dark red corresponding to $\alpha=1$ through to $\alpha=0$ in light red. $n$-step uncorrected methods are illustrated in blue, ranging from $n=1$ (dark blue) through to $n=50$ (light blue). $n$-step importance-weighted methods are illustrated in green, ranging from $n=1$ (dark green) through to $n=4$ (light green). Results are given for a Dirichlet-Uniform random MDP (Figure \ref{fig:pareto-random}), a random garnet MDP (Figure~\ref{fig:pareto-garnet}), and the chain MDP described in Section \ref{sec:envs} (Figure \ref{fig:pareto-gw1d}). In all cases, we use a discount rate $\gamma=0.9$, a learning rate for each algorithm of $0.1$, and the variance variable is estimated from $5000$ i.i.d. trajectories of length $100$. All Retrace methods use $\lambda = 1$ (as presented in the main paper).

\begin{figure}
    \centering
    \begin{subfigure}{1.0\textwidth}
      \centering
      \includegraphics[keepaspectratio,width=.6\textwidth]{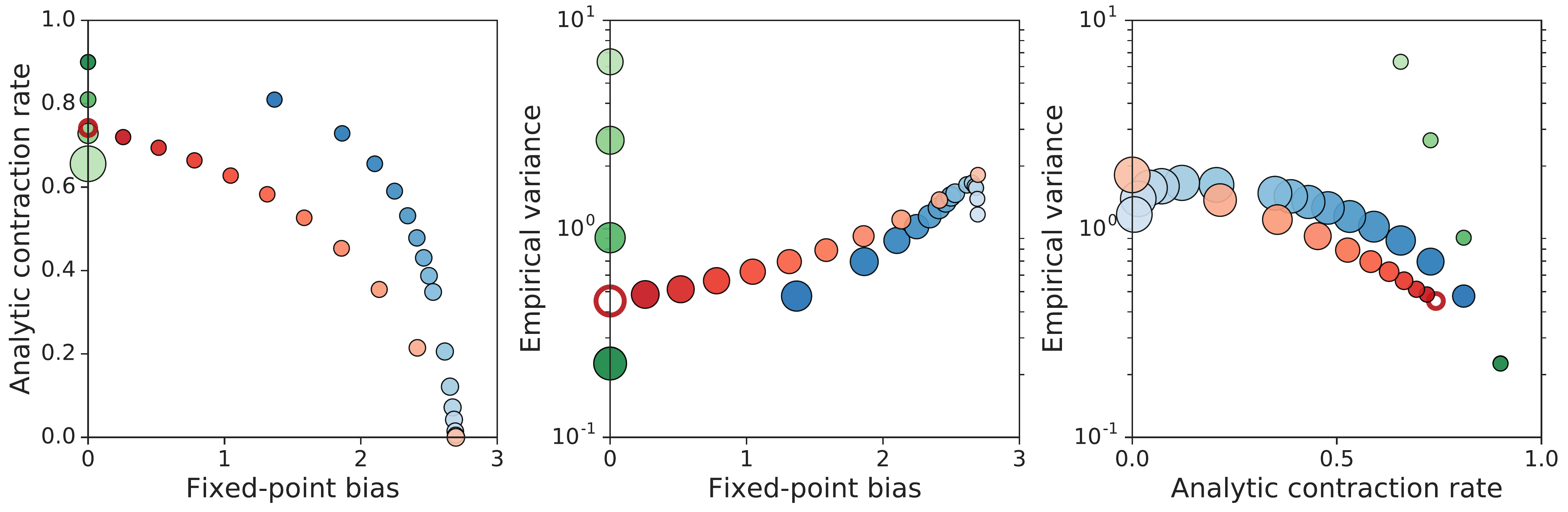}
      \vspace{-0.2cm}
      \caption{Target policy: uniform. Behaviour policy: Dirichlet($1,\ldots,1$) random.}
    \end{subfigure}%
    
    \begin{subfigure}{1.0\textwidth}
      \centering
      \includegraphics[keepaspectratio,width=.6\textwidth]{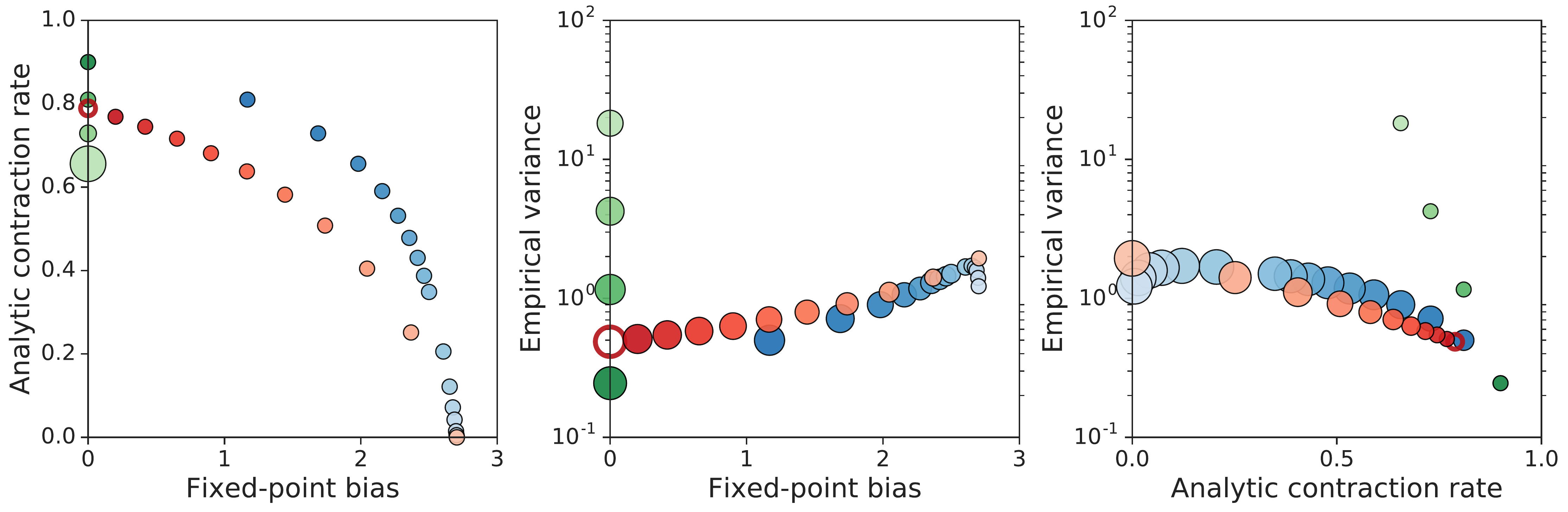}
      \vspace{-0.1cm}
      \caption{Target policy: Dirichlet($1,\ldots,1$) random. Behaviour policy: Independent Dirichlet($1,\ldots,1$) random.}
    \end{subfigure}%
    
    \begin{subfigure}{1.0\textwidth}
      \centering
      \includegraphics[keepaspectratio,width=.6\textwidth]{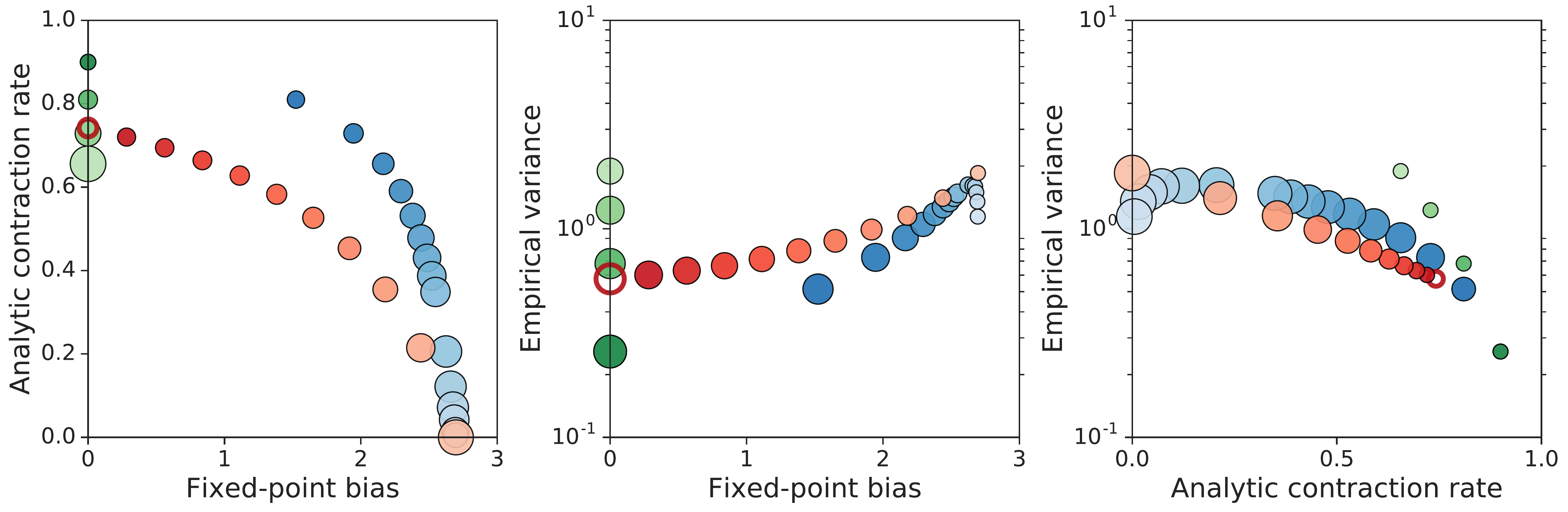}
      \vspace{-0.1cm}
      \caption{Target policy: Dirichlet($1,\ldots,1$) random. Behaviour policy: uniform.}
    \end{subfigure}%
    
    \begin{subfigure}{1.0\textwidth}
      \centering
      \includegraphics[keepaspectratio,width=.6\textwidth]{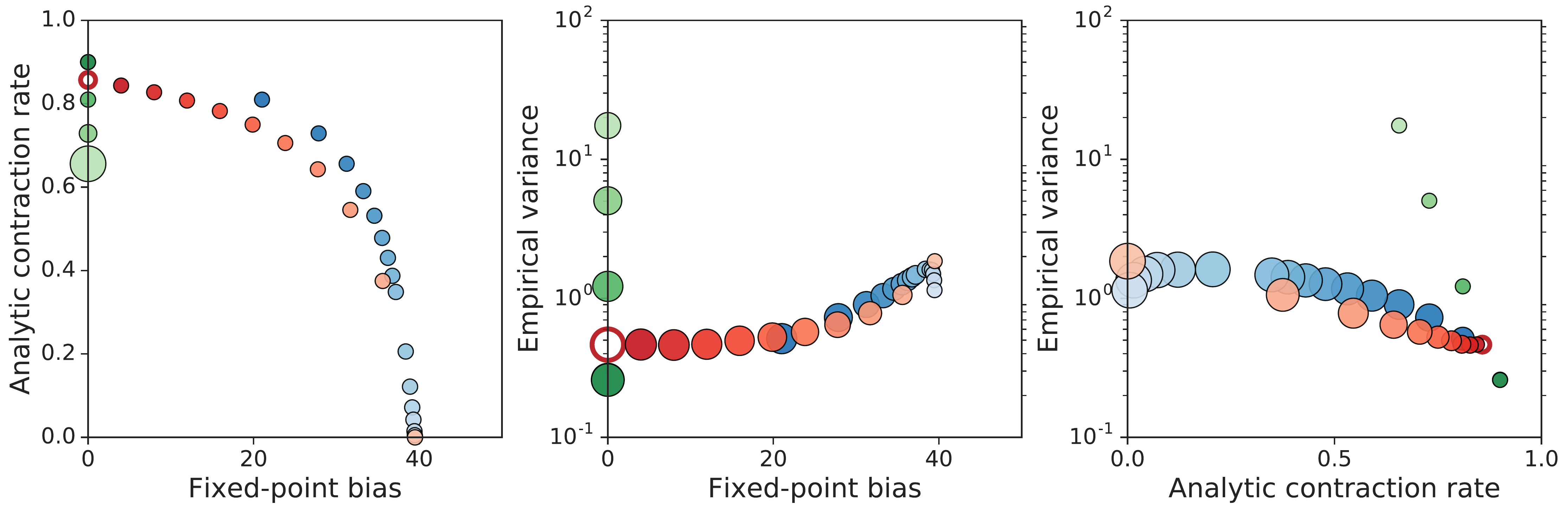}
      \vspace{-0.1cm}
      \caption{Target policy: optimal. Behaviour policy: uniform.}
    \end{subfigure}%
    
    \begin{subfigure}{1.0\textwidth}
      \centering
      \includegraphics[keepaspectratio,width=.6\textwidth]{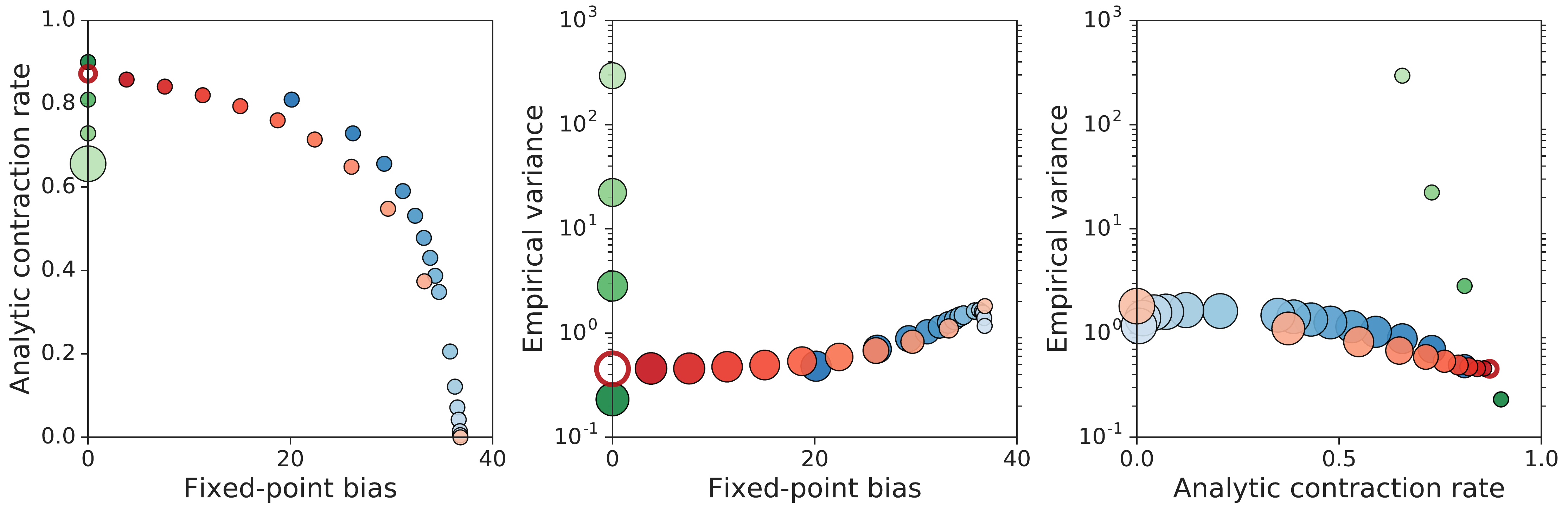}
      \vspace{-0.1cm}
      \caption{Target policy: optimal. Behaviour policy: Dirichlet($1,\ldots,1$) random.}
    \end{subfigure}%
    
    \begin{subfigure}{1.0\textwidth}
      \centering
      \includegraphics[keepaspectratio,width=.6\textwidth]{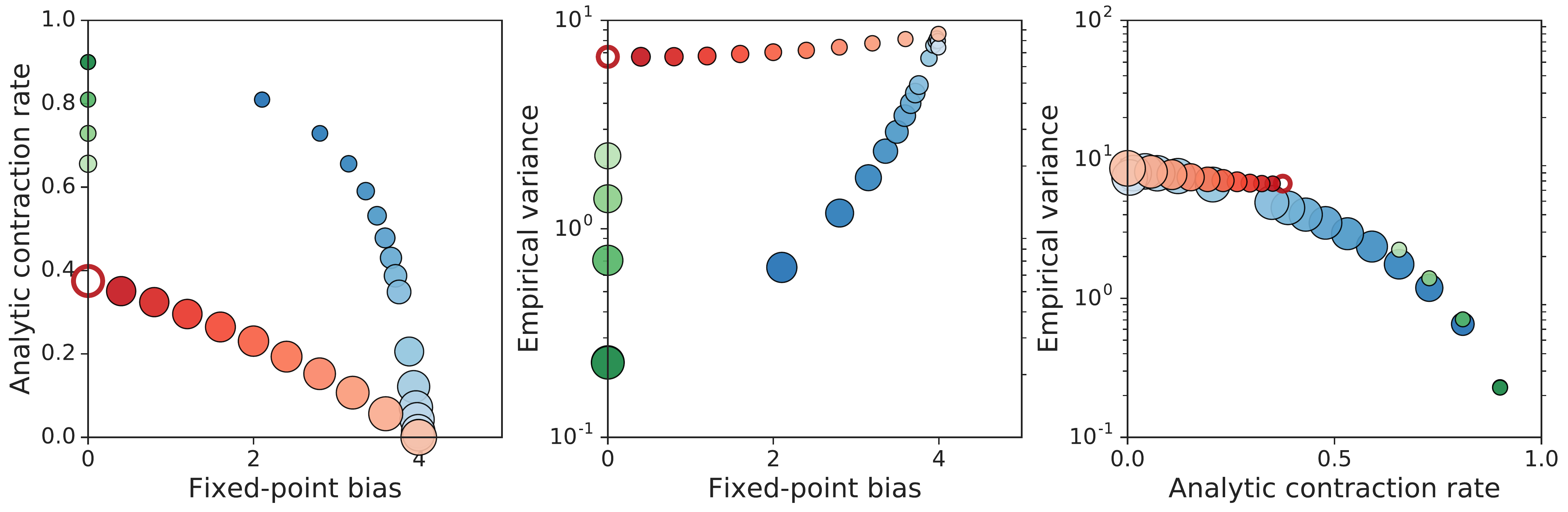}
      \vspace{-0.1cm}
      \caption{Target policy: optimal. Behaviour policy: optimal, with uniform exploration at probability $0.1$.}
    \end{subfigure}%
    
    \caption{Trade-off plots for a Dirichlet-Uniform random MDP with $20$ states and $3$ actions, with a variety of target policy/behaviour policy pairings.}
    \label{fig:pareto-random}
\end{figure}

\begin{figure}
    \centering
    \begin{subfigure}{1.0\textwidth}
      \centering
      \includegraphics[keepaspectratio,width=.6\textwidth]{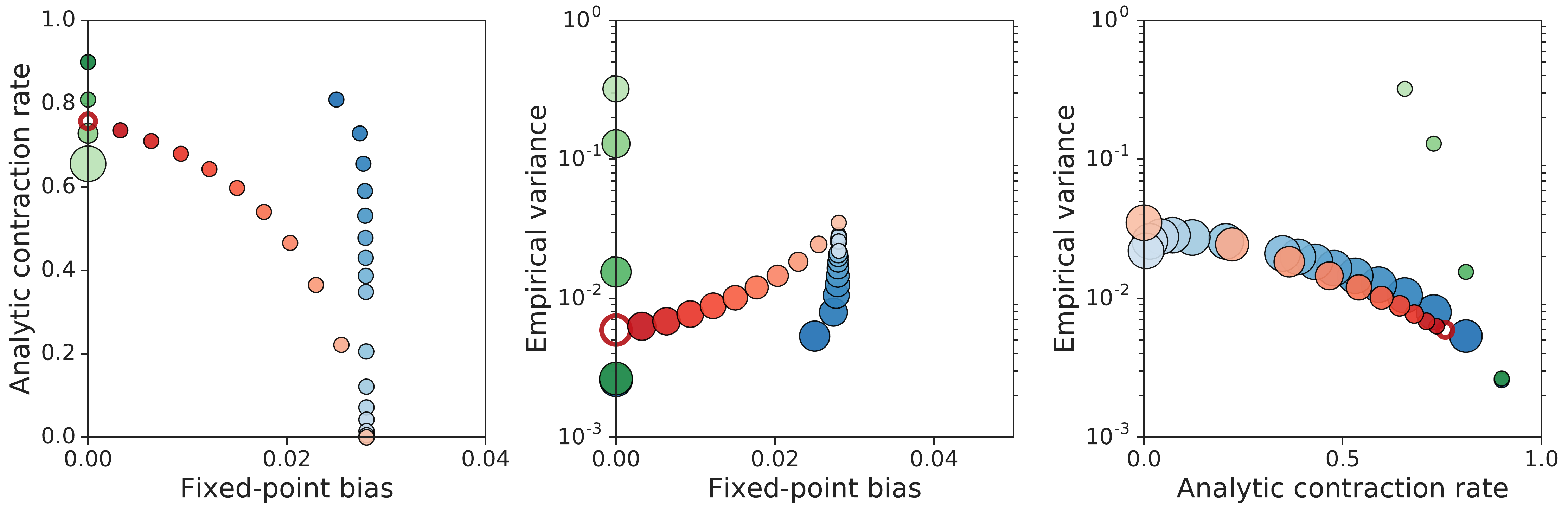}
      \vspace{-0.2cm}
      \caption{Target policy: uniform. Behaviour policy: Dirichlet($1,\ldots,1$) random.}
    \end{subfigure}%
    
    \begin{subfigure}{1.0\textwidth}
      \centering
      \includegraphics[keepaspectratio,width=.6\textwidth]{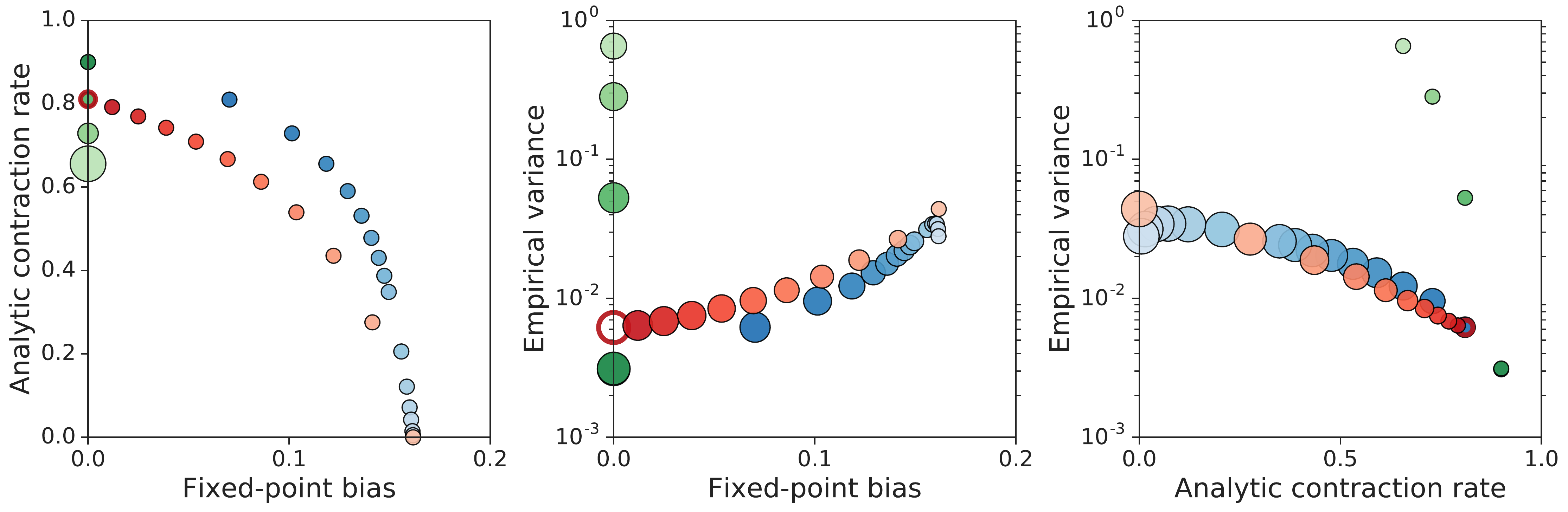}
      \vspace{-0.1cm}
      \caption{Target policy: Dirichlet($1,\ldots,1$) random. Behaviour policy: Independent Dirichlet($1,\ldots,1$) random.}
    \end{subfigure}%
    
    \begin{subfigure}{1.0\textwidth}
      \centering
      \includegraphics[keepaspectratio,width=.6\textwidth]{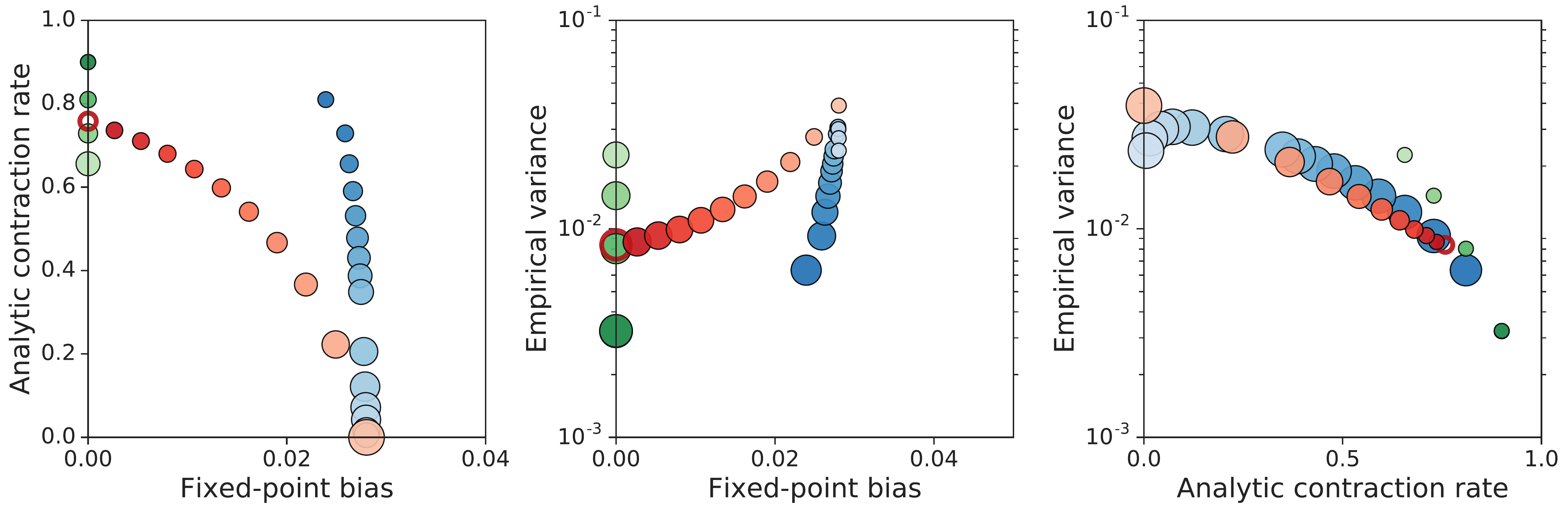}
      \vspace{-0.1cm}
      \caption{Target policy: Dirichlet($1,\ldots,1$) random. Behaviour policy: uniform.}
    \end{subfigure}%
    
    \begin{subfigure}{1.0\textwidth}
      \centering
      \includegraphics[keepaspectratio,width=.6\textwidth]{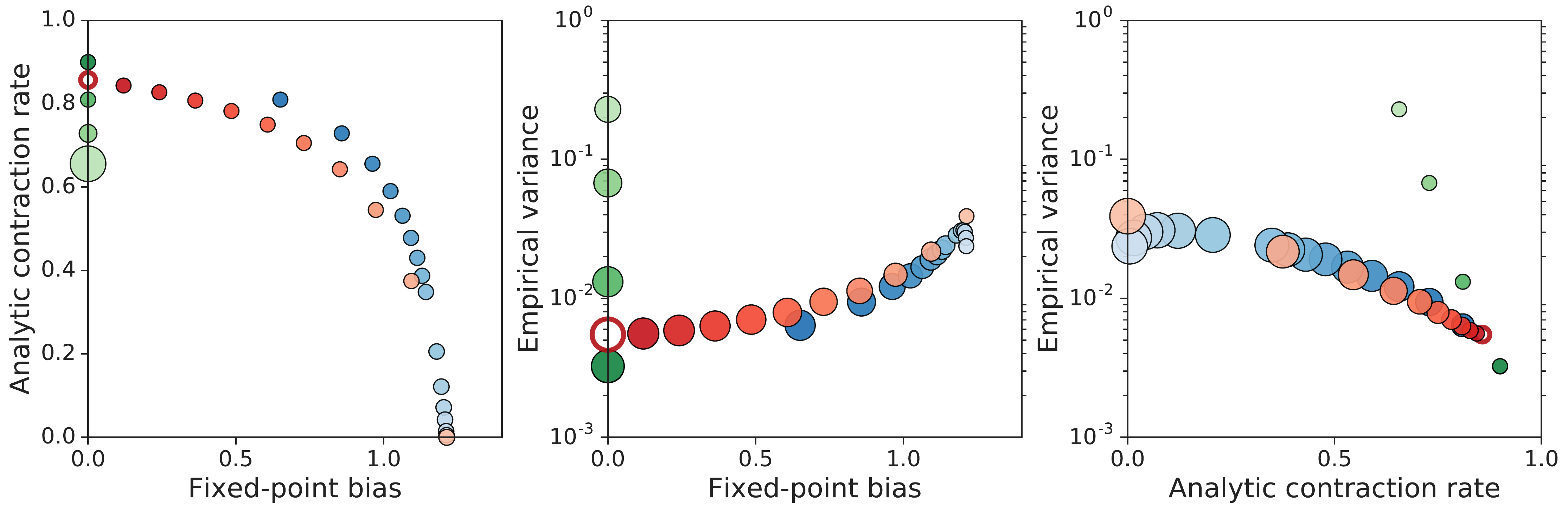}
      \vspace{-0.1cm}
      \caption{Target policy: optimal. Behaviour policy: uniform.}
    \end{subfigure}%
    
    \begin{subfigure}{1.0\textwidth}
      \centering
      \includegraphics[keepaspectratio,width=.6\textwidth]{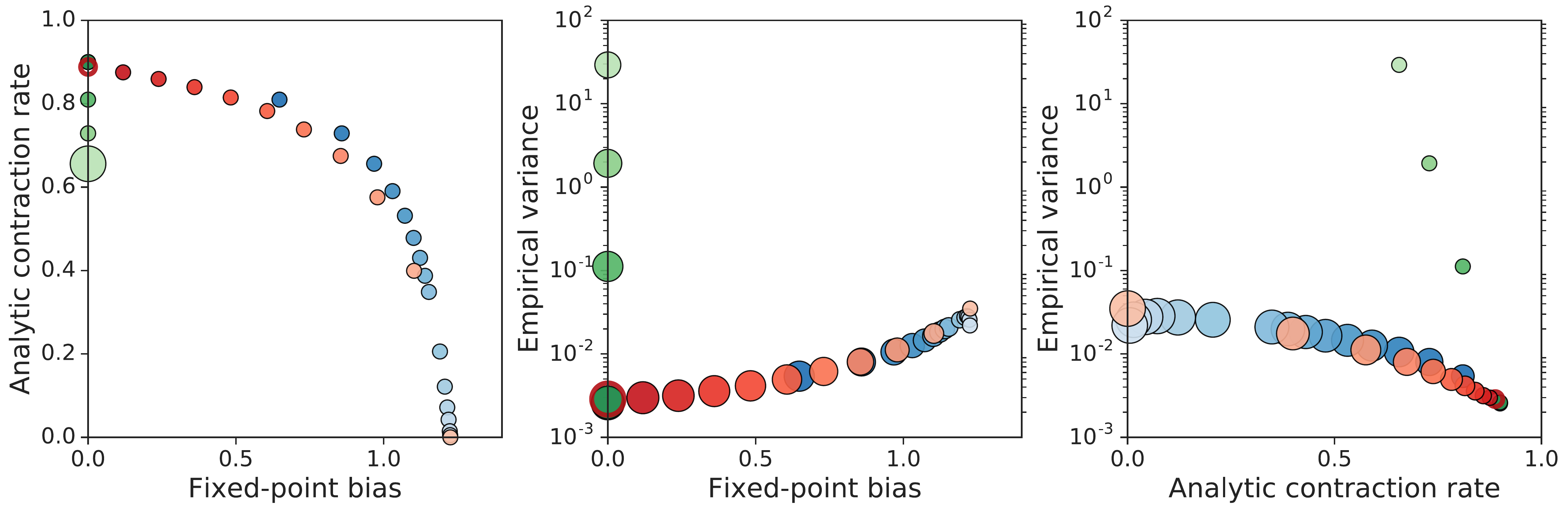}
      \vspace{-0.1cm}
      \caption{Target policy: optimal. Behaviour policy: Dirichlet($1,\ldots,1$) random.}
    \end{subfigure}%
    
    \begin{subfigure}{1.0\textwidth}
      \centering
      \includegraphics[keepaspectratio,width=.6\textwidth]{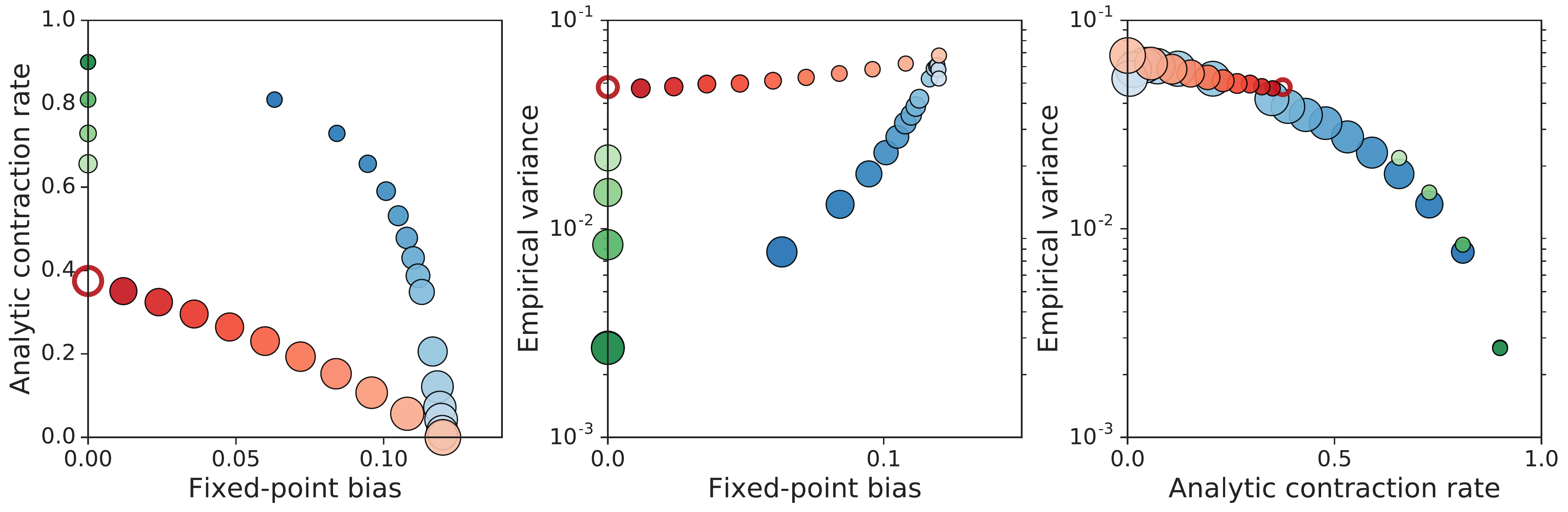}
      \vspace{-0.1cm}
      \caption{Target policy: optimal. Behaviour policy: optimal, with uniform exploration at probability $0.1$.}
    \end{subfigure}%
    
    \caption{Trade-off plots for a garnet random MDP with $20$ states and $3$ actions, with a variety of target policy/behaviour policy pairings.}
    \label{fig:pareto-garnet}
\end{figure}

\begin{figure}
    \centering
    \begin{subfigure}{1.0\textwidth}
      \centering
      \includegraphics[keepaspectratio,width=.6\textwidth]{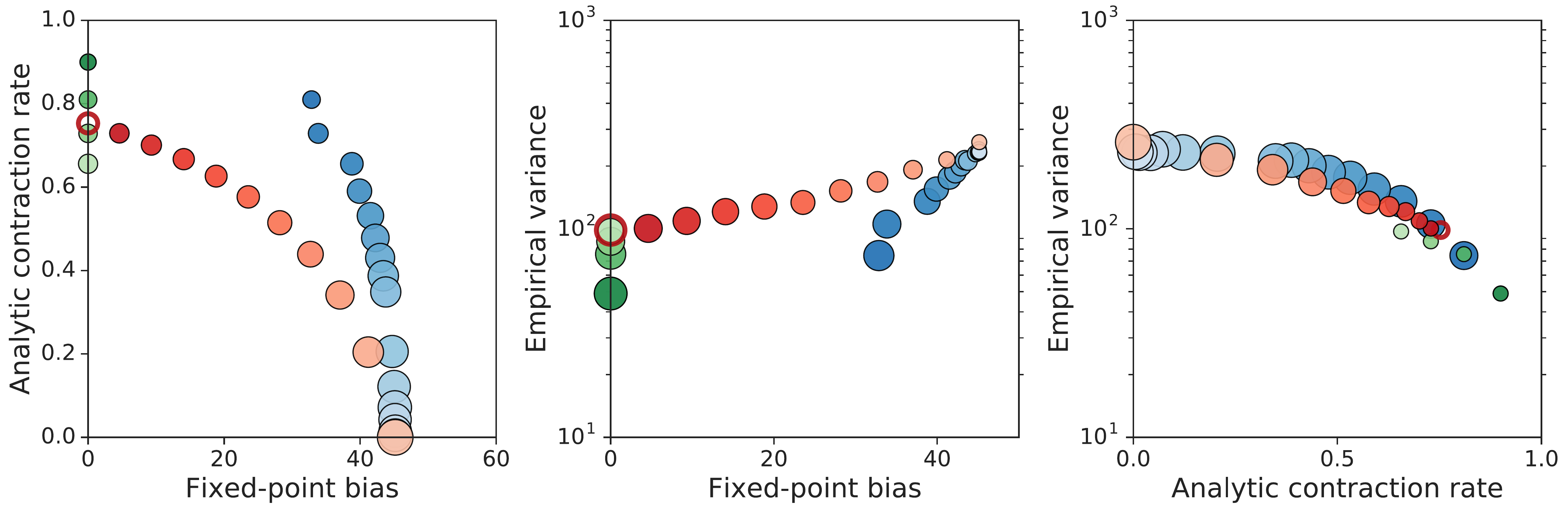}
      \vspace{-0.2cm}
      \caption{Target policy: uniform. Behaviour policy: Dirichlet($1,\ldots,1$) random.}
    \end{subfigure}%
    
    \begin{subfigure}{1.0\textwidth}
      \centering
      \includegraphics[keepaspectratio,width=.6\textwidth]{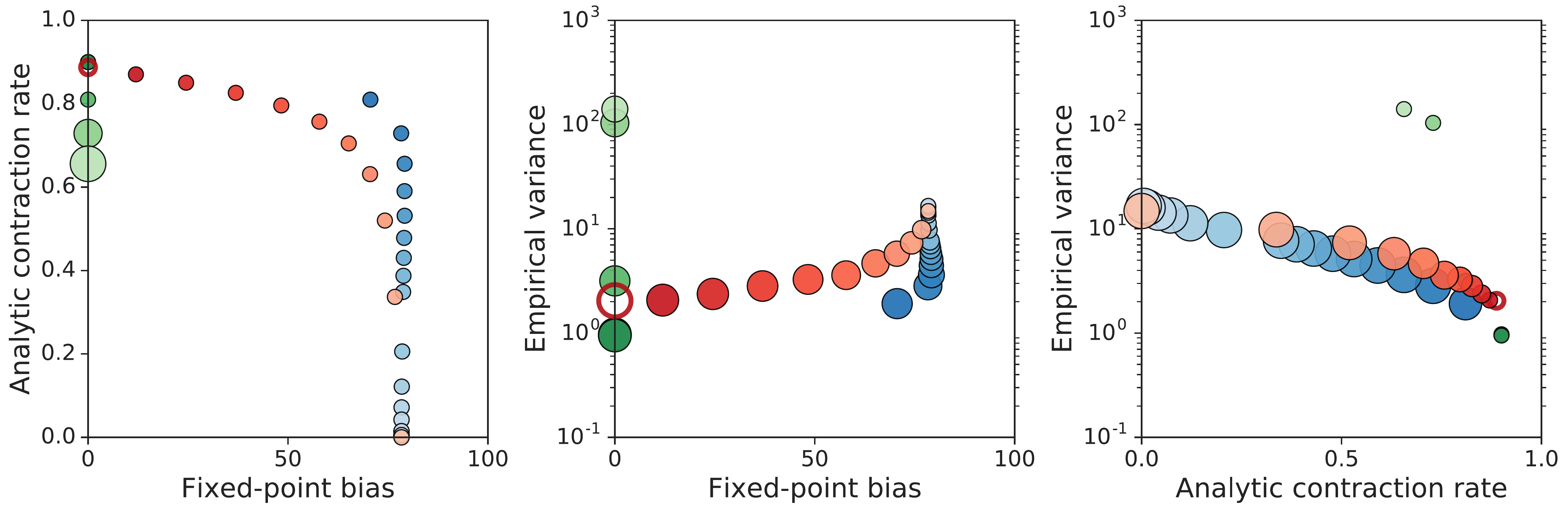}
      \vspace{-0.1cm}
      \caption{Target policy: Dirichlet($1,\ldots,1$) random. Behaviour policy: Independent Dirichlet($1,\ldots,1$) random.}
    \end{subfigure}%
    
    \begin{subfigure}{1.0\textwidth}
      \centering
      \includegraphics[keepaspectratio,width=.6\textwidth]{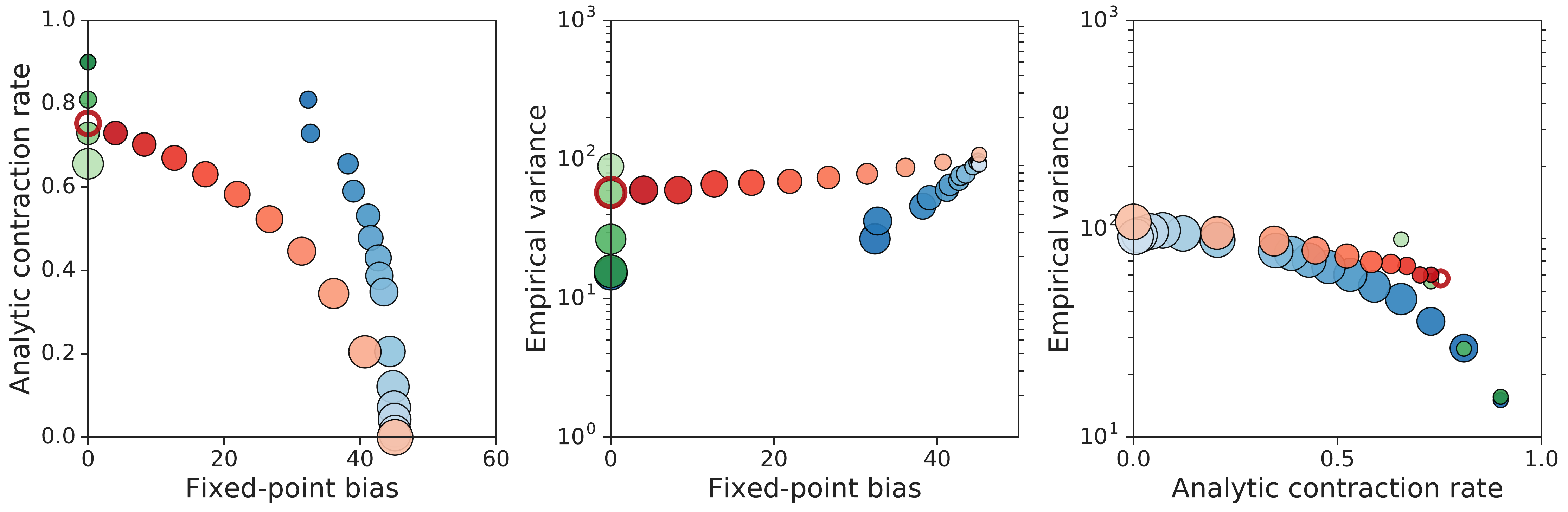}
      \vspace{-0.1cm}
      \caption{Target policy: Dirichlet($1,\ldots,1$) random. Behaviour policy: uniform.}
    \end{subfigure}%
    
    \begin{subfigure}{1.0\textwidth}
      \centering
      \includegraphics[keepaspectratio,width=.6\textwidth]{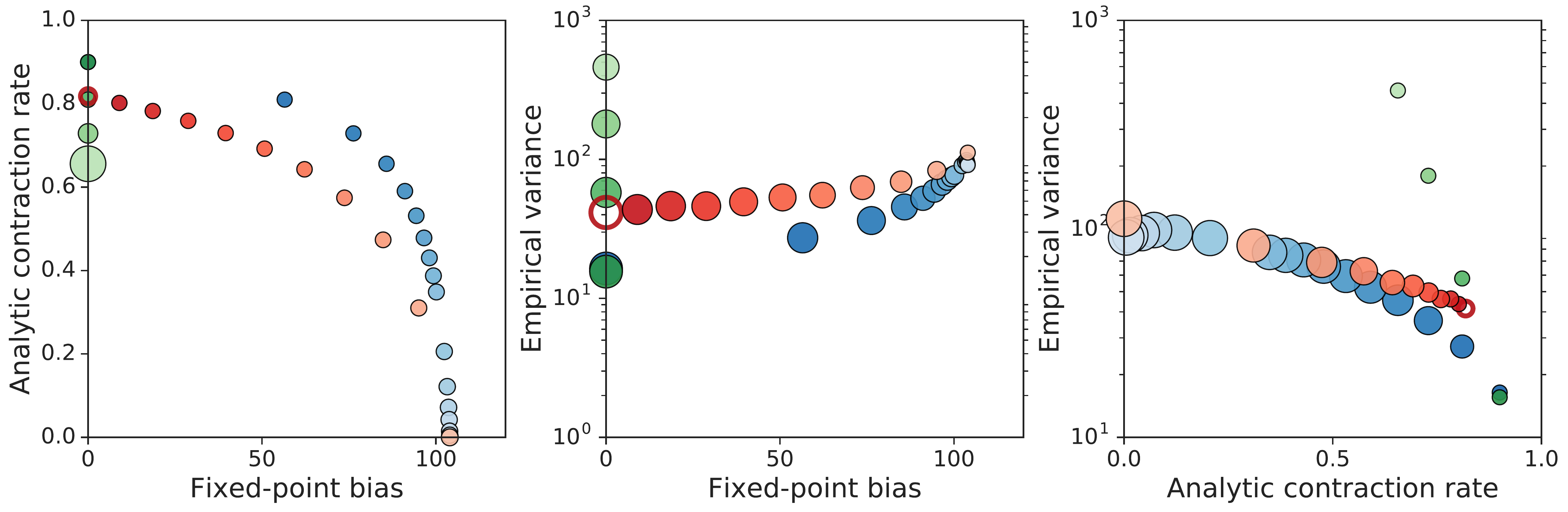}
      \vspace{-0.1cm}
      \caption{Target policy: optimal. Behaviour policy: uniform.}
    \end{subfigure}%
    
    \begin{subfigure}{1.0\textwidth}
      \centering
      \includegraphics[keepaspectratio,width=.6\textwidth]{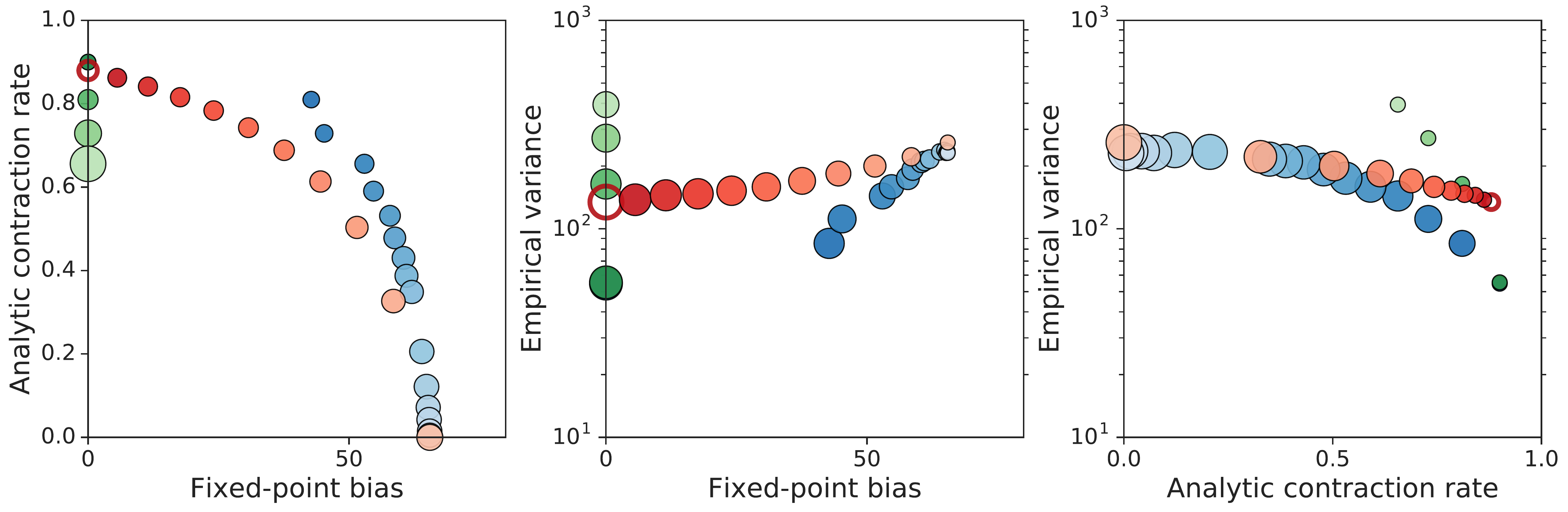}
      \vspace{-0.1cm}
      \caption{Target policy: optimal. Behaviour policy: Dirichlet($1,\ldots,1$) random.}
    \end{subfigure}%
    
    \begin{subfigure}{1.0\textwidth}
      \centering
      \includegraphics[keepaspectratio,width=.6\textwidth]{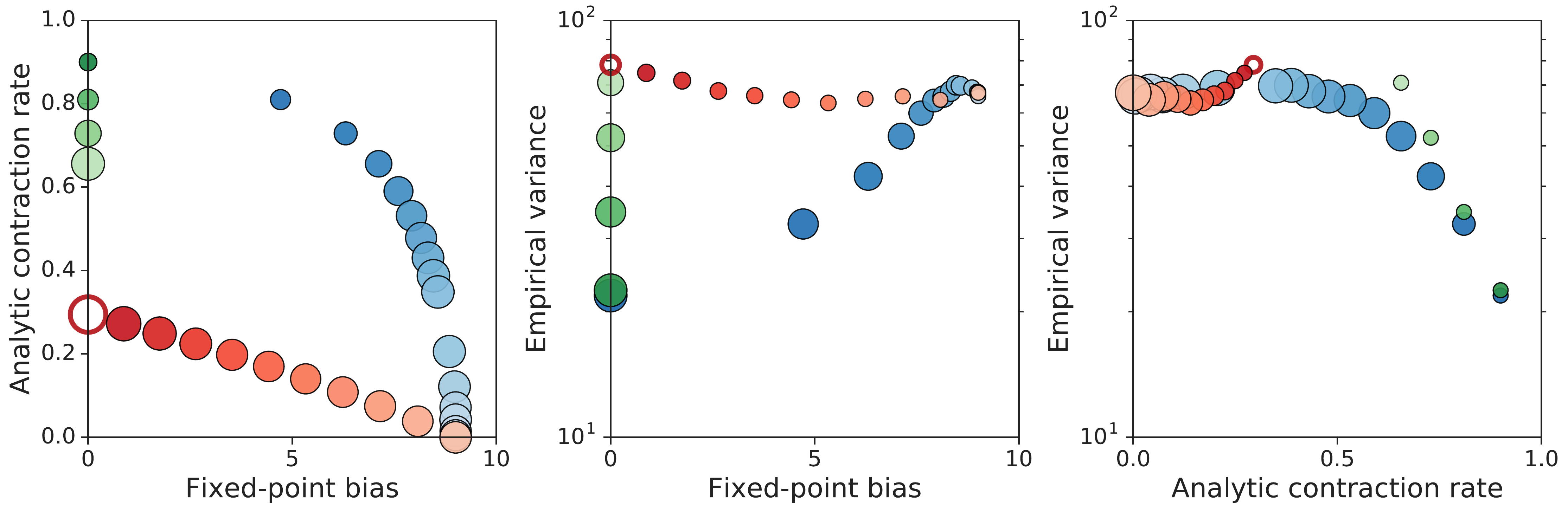}
      \vspace{-0.1cm}
      \caption{Target policy: optimal. Behaviour policy: optimal, with uniform exploration at probability $0.1$.}
    \end{subfigure}%
    
    \caption{Trade-off plots for the chain MDP described in Section \ref{sec:envs}, with $20$ states, with a variety of target policy/behaviour policy pairings.}
    \label{fig:pareto-gw1d}
\end{figure}

\clearpage
\section{Large-scale experiment details}

Episodes are limited to 30 minutes ($108,000$ environment frames). When reporting numeric scores, as opposed to learning curves, we give final agent performance as undiscounted episodic returns. The computing architectures used to run the two agents correspond precisely to the descriptions given in \cite{mnih2015human,kapturowski2018recurrent}.

Mini-batches are drawn from an experience replay buffer as described in the baseline agent papers \citep{mnih2015human,kapturowski2018recurrent}. For Retrace and \ctrace, the $n$-step loss function is modified to use the Retrace update for the, possibly modified, target policy. GPU training was performed on an NVIDIA Tesla V100.

Only for R2D2 experiments, all agents (including Retrace-based algorithms) use the invertible value function rescaling of R2D2. Finally, for \ctrace, the target policy is given by
\begin{equation*}
    \hat \pi := (1 - \alpha) \pi + \alpha \mu,
\end{equation*}
where $\pi$ is the greedy policy on the current action-values and $\mu$ is the $\epsilon$-greedy policy followed by the actor generating the current trajectory.
The value of $\alpha$ is adapted with each mini-batch using Robbins-Monro updates with truncated trajectory targets, as described in Section \eqref{sec:adaptingalpha}. The average observed contraction rate of Retrace over the mini-batch is calculated from the Retrace weights (see Equation~\eqref{eq:retracealphacontraction}),
\begin{equation*}
    \hat{\alpharetraceContract}(\alpha) = 1 - (1-\gamma) \sum_{t=0}^N \gamma^t \prod_{s=1}^t \left((1-\alpha) + \alpha \min\left(1, \frac{\pi(a_s|x_s)}{\mu(a_s|x_s)}\right) \right)  \, .
\end{equation*}
For simplicity we restate the Robbins-Monro update as a loss, scale up by $1000/(1 - \gamma)$ (to counter-act the small learning rate from Adam) and add it to the primary loss. We use a value of $\lambda = 1.0$ for all Retrace and \ctrace\ large-scale experiments. We considered $\lambda = 0.97$, in keeping with published work, but found larger values to perform better overall.

\subsection{R2D2 experiments}\label{sec:r2d2details}

\textbf{Network architecture.} R2D2, and our Retrace variants, use the 3-layer convolutional network from DQN \citep{mnih2015human}, followed by an LSTM with $512$ hidden units, which then feeds into a dueling architecture of size $512$ \citep{wang2016dueling}. Like the original R2D2, the LSTM receives the reward and one-hot action vector from the previous time step as inputs.

\textbf{Hyperparameters.} The hyperparameters used for the R2D2 agents follow those of \citet{kapturowski2018recurrent}, and are reproduced in Table \ref{tab:hyper} for completeness.

\begin{table}[h!]
    \centering
    \begin{tabular}{c|c}
         Number of actors &  $256$  \\
         Actor parameter update interval & $400$ environment steps \\
         \hline
         
         Sequence length $m$ & $80$ (+ prefix of $l = 40$ for burn-in) \\ 
         Replay buffer size & $4\times 10^6$ observations ($10^5$ part-overlapping sequences) \\
         Priority exponent & $0.9$ \\
         Importance sampling exponent & $0.6$ \\ 

         \hline
         Discount $\gamma$ & $0.997$ \\
         Minibatch size & $64$ \\
        
         Optimiser & Adam \citep{kingma2014adam} \\
         Optimiser settings & learning rate $= 10^{-4}$, $\varepsilon=10^{-3}$ \\
         Target network update interval & $2500$ updates \\
         Value function rescaling & $h(x) = \text{sign}(x)(\sqrt{|x| + 1} - 1) + \epsilon x, ~\epsilon=10^{-3}$\\

    \end{tabular}
    \caption{Hyperparameters values used in R2D2 experiments.}
    \label{tab:hyper}
\end{table}

\subsection{DQN experiments}\label{sec:dqndetails}

\textbf{Network architecture.} The DQN, DoubleDQN, $n$-step and Retrace-based agents use the 3-layer convolutional network from DQN \citep{mnih2015human}, but unlike the R2D2 agents do not use an LSTM or dueling architecture. Notice that the Retrace and \ctrace\ agents are effectively using DoubleDQN-style updates due to the target probabilities not coming from the target network.

\textbf{Hyperparameters.} For sequential DQN-agents ($n$-step and Retrace) we performed a preliminary hyperparameter sweep to determine appropriate learning rates for $n$-step and Retrace updates. We swept over learning rates ($0.00025, 0.0001, 0.00005, 0.00001$) for both algorithms, and for $n$-step we jointly swept over two values for $n$ ($3$ and $5$). These were run on four Atari 2600 games (Alien, Amidar, Assault, Asterix), with the best performing hyperparameters for each method used for the Atari-57 experiments.

Interestingly, we found a small learning rate of $0.00001$ worked best for both algorithms and that a larger $n = 5$ performed best for $n$-step. 

Both algorithms used a maximum sequence length of $16$. Due to shortness of the sequence length we use truncated trajectory corrections as described in the main text. Note that the truncation $\max(\Gamma, \gamma^N)$ is applied to each element of the sequence independently, therefore the value of $N$ will begin at $N = 16$ for the first element and reduce to $N = 1$ for the final transition in the replay sequence.

\clearpage

\section{Further large-scale results}

\subsection{Detailed R2D2 results}\label{sec:r2d2detailedresults}

We give further experimental results to complement the summary presented in the main paper; per-game training curves are given in Figure \ref{fig:r2d2-games}.

\begin{figure}[h]
    \centering
    \includegraphics[keepaspectratio,width=.94\textwidth]{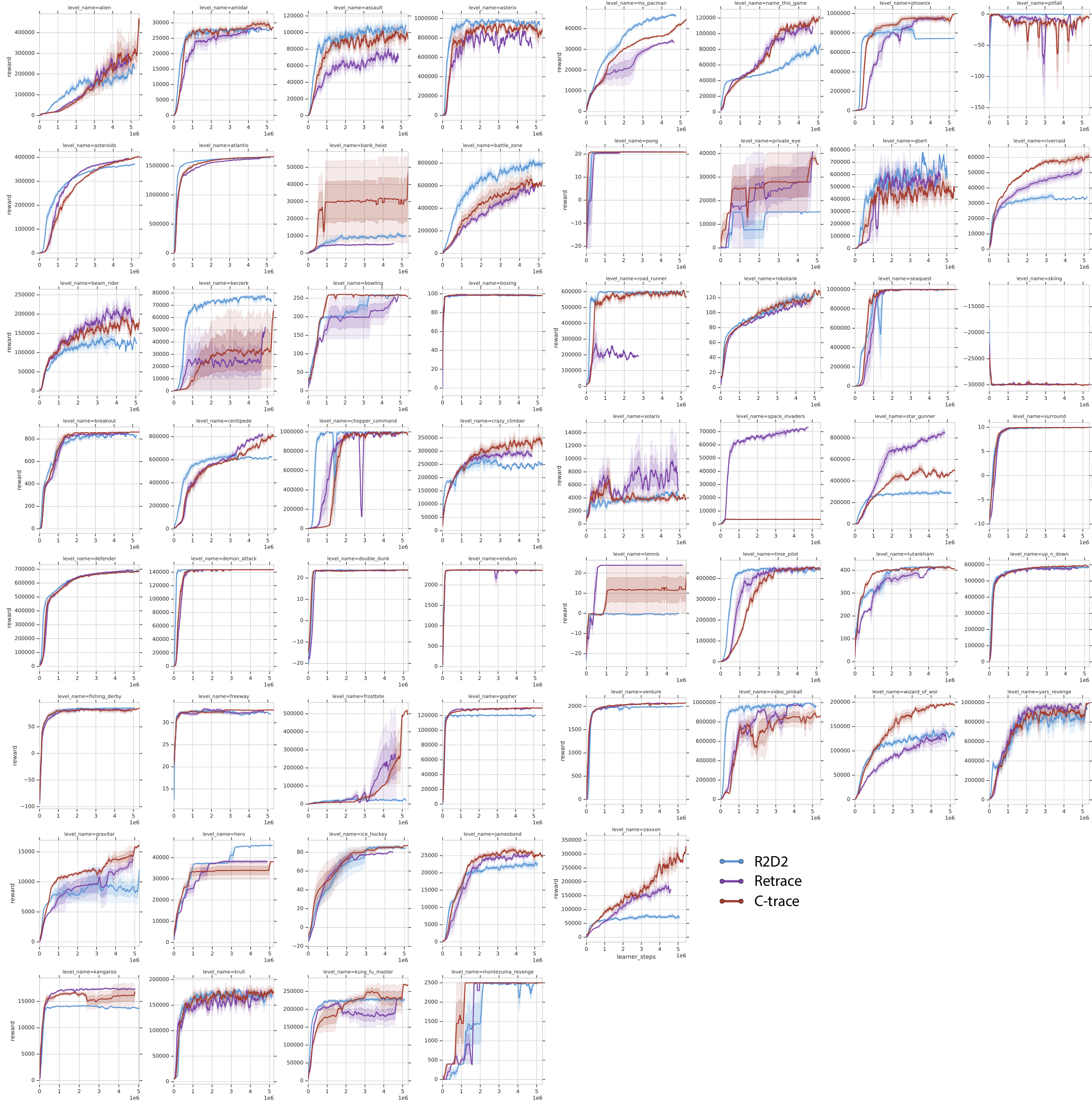}
	\caption{Training curves for 57 Atari games for R2D2 with $n$-step uncorrected returns (light blue), Retrace-R2D2 (black), and \ctrace-R2D2 (red).}
    \label{fig:r2d2-games}
\end{figure}

\clearpage

\subsection{Detailed DQN results}\label{sec:dqndetailedresults}

We give further experimental results to complement the summary presented in the main paper. Results for varying the contraction hyperparameter are given in Figure~\ref{fig:dqn_varied_n}, and per-game training curves for the main paper results are given in Figure \ref{fig:dqn-games}.

\begin{figure}[h]
    \centering
    \includegraphics[keepaspectratio,width=.9\textwidth]{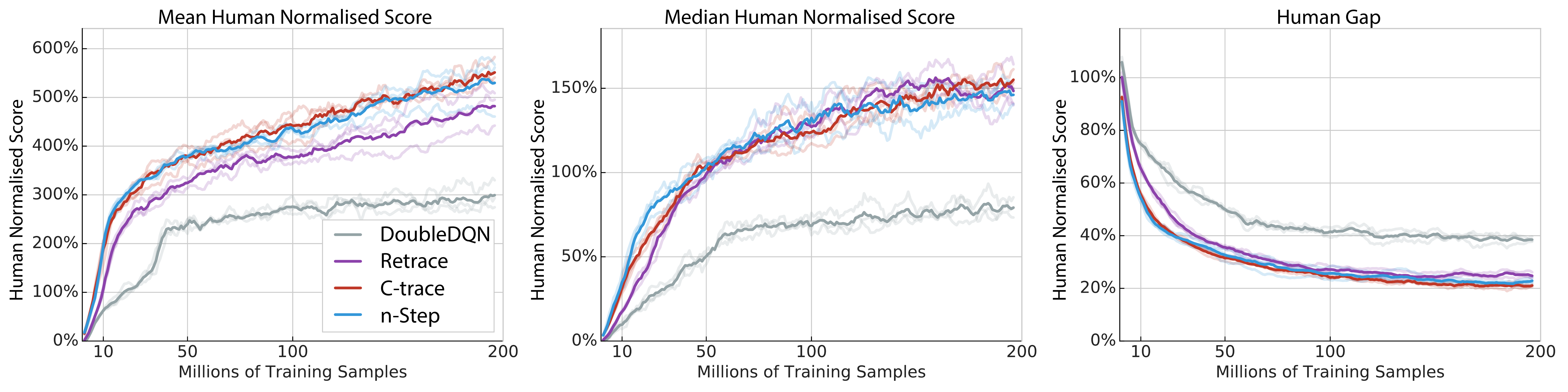}
    \includegraphics[keepaspectratio,width=.9\textwidth]{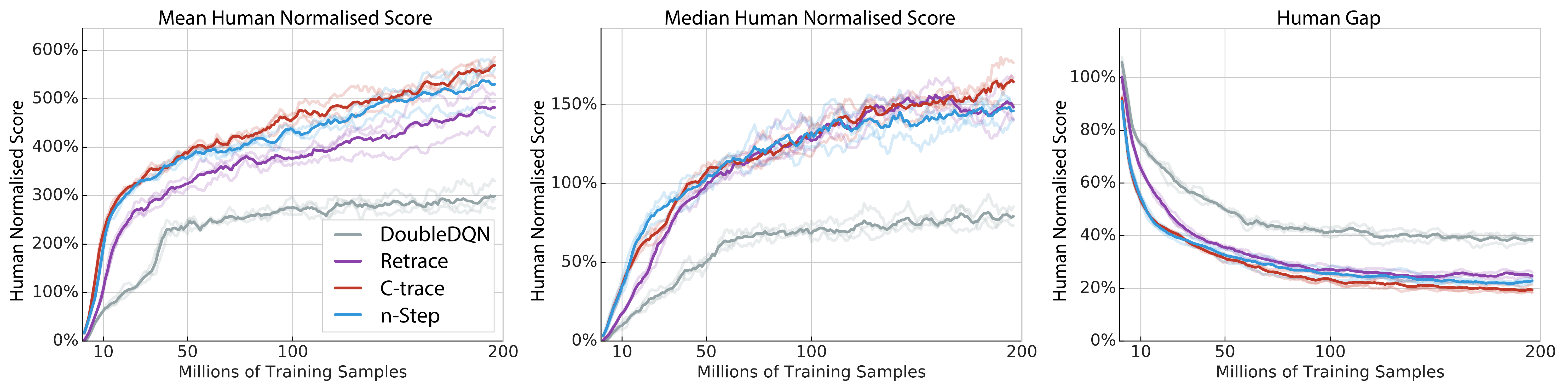}
    \includegraphics[keepaspectratio,width=.9\textwidth]{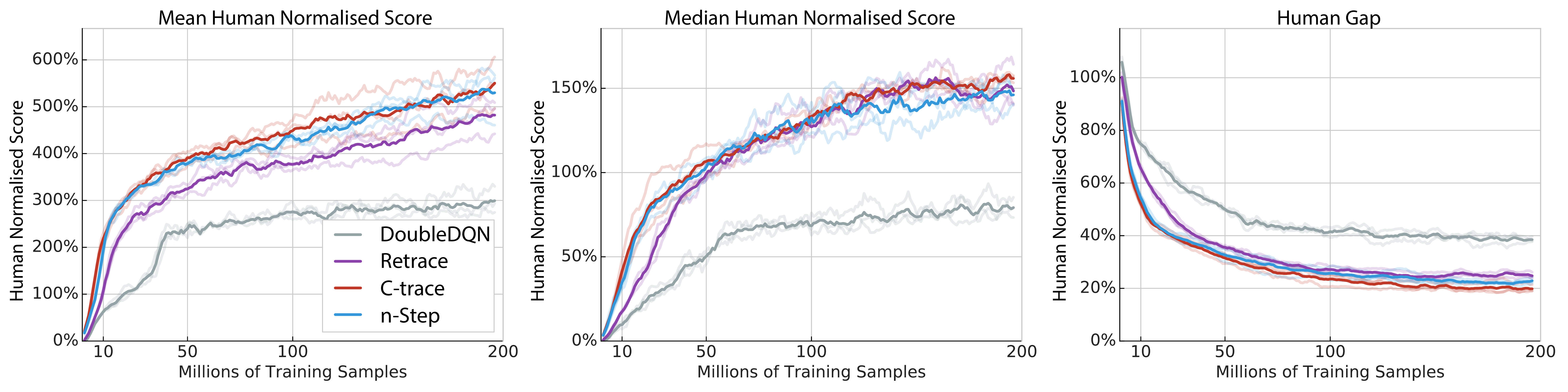}
    \caption{Atari-57 results for single-actor agent, as presented in the main text, but varying the C-trace contraction parameter: (\textbf{top}) $\gamma^5$, (\textbf{centre}) $\gamma^7$, and (\textbf{bottom}) $\gamma^{10}$. Notice that due to its adaptation of $\alpha$, C-trace is highly robust to the choice of contraction target.}
    \label{fig:dqn_varied_n}
\end{figure}

\begin{figure}[h]
    \centering
    \includegraphics[keepaspectratio,width=.85\textwidth]{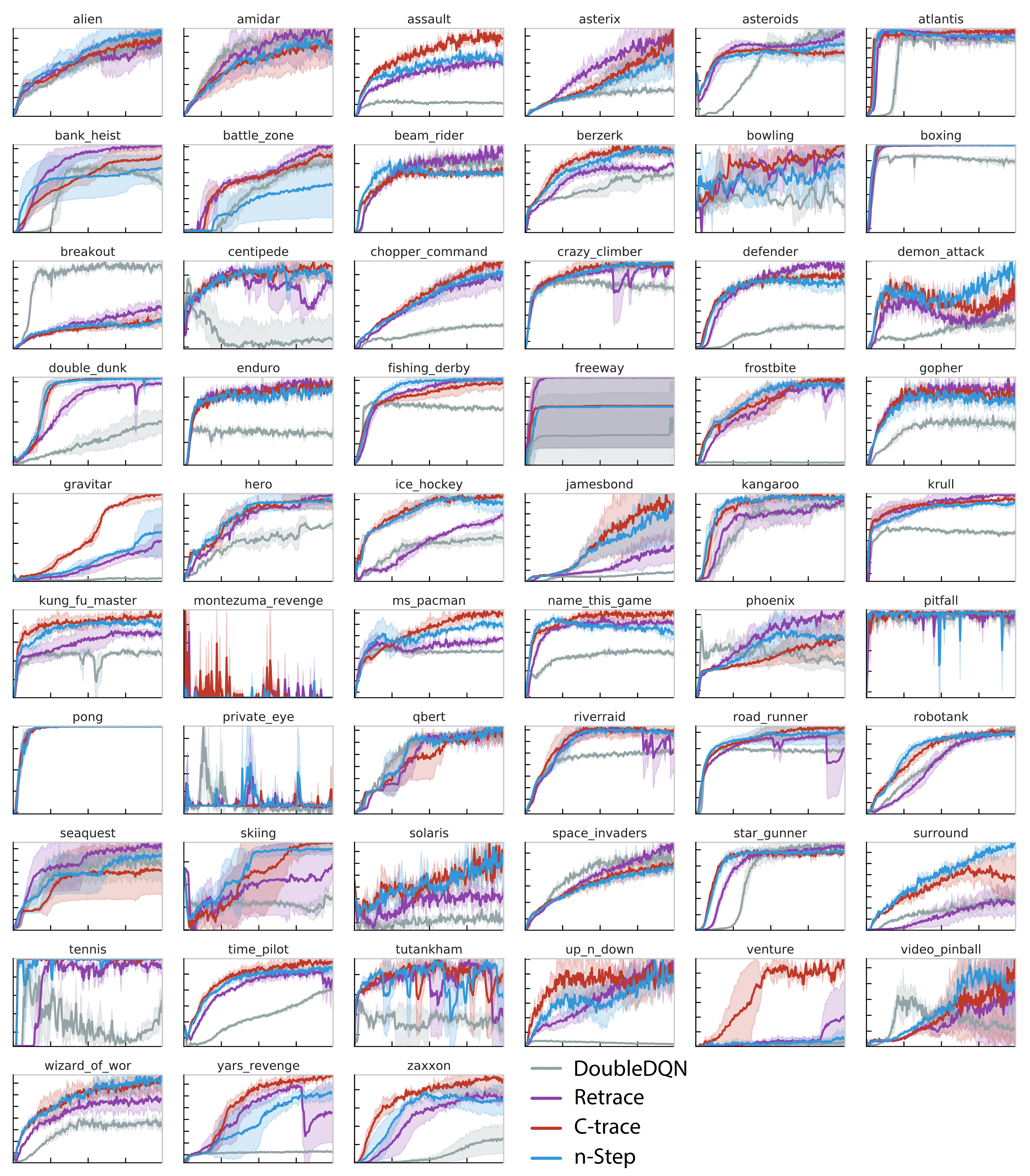}
    \caption{Training curves for 57 Atari games for Double DQN (grey), Double DQN with $n$-step uncorrected returns (light blue), Retrace-DQN (black), and \ctrace-DQN (red).}
    \label{fig:dqn-games}
\end{figure}

\subsection{Empirical verification of C-trace contraction rate}\label{sec:contraction-verification}

In this section, we provide additional analysis of the empirical R2D2 results described in the main paper, to investigate whether C-trace is able to target the desired contraction rate in practice when used in combination with a deep reinforcement learning architecture. We compute averaged contraction rates using Equation \eqref{eq:retracealphacontraction}, and in Figure~\ref{fig:contraction_dists} we provide kernel density estimation (KDE) plots of these  contraction rates achieved by C-trace and Retrace at the end of training across two classes of games. Firstly, those games for which Retrace achieves a contraction rate of more than $\gamma^{10}$, the chosen target rate for C-trace in this instance, and secondly, the complement of these games. Splitting the games into these two classes illustrates the behaviour of C-trace clearly. In the first class of games, it is possible for C-trace to attain the contraction rate $\gamma^{10}$, by lowering the mixture parameter, whilst in the second class, Retrace already has a contraction rate below the target of $\gamma^{10}$. This description is reflected in the plots; in the left plot of Figure~\ref{fig:contraction_dists}, the Retrace distribution lies to the right of the target contraction rate, whilst the C-trace distribution is centred precisely on this rate, whilst on the right, the Retrace distribution lies to the left of the target contraction rate, and the C-trace distribution closely matches it, as there is no possibility of increasing the target contraction rate. We also provide per-game KDE plots of contraction rates attained throughout training in Figure~\ref{fig:per_game_contraction}.

\begin{figure}
    \centering
    \null
    \hfill
    \includegraphics[keepaspectratio,width=.35\textwidth]{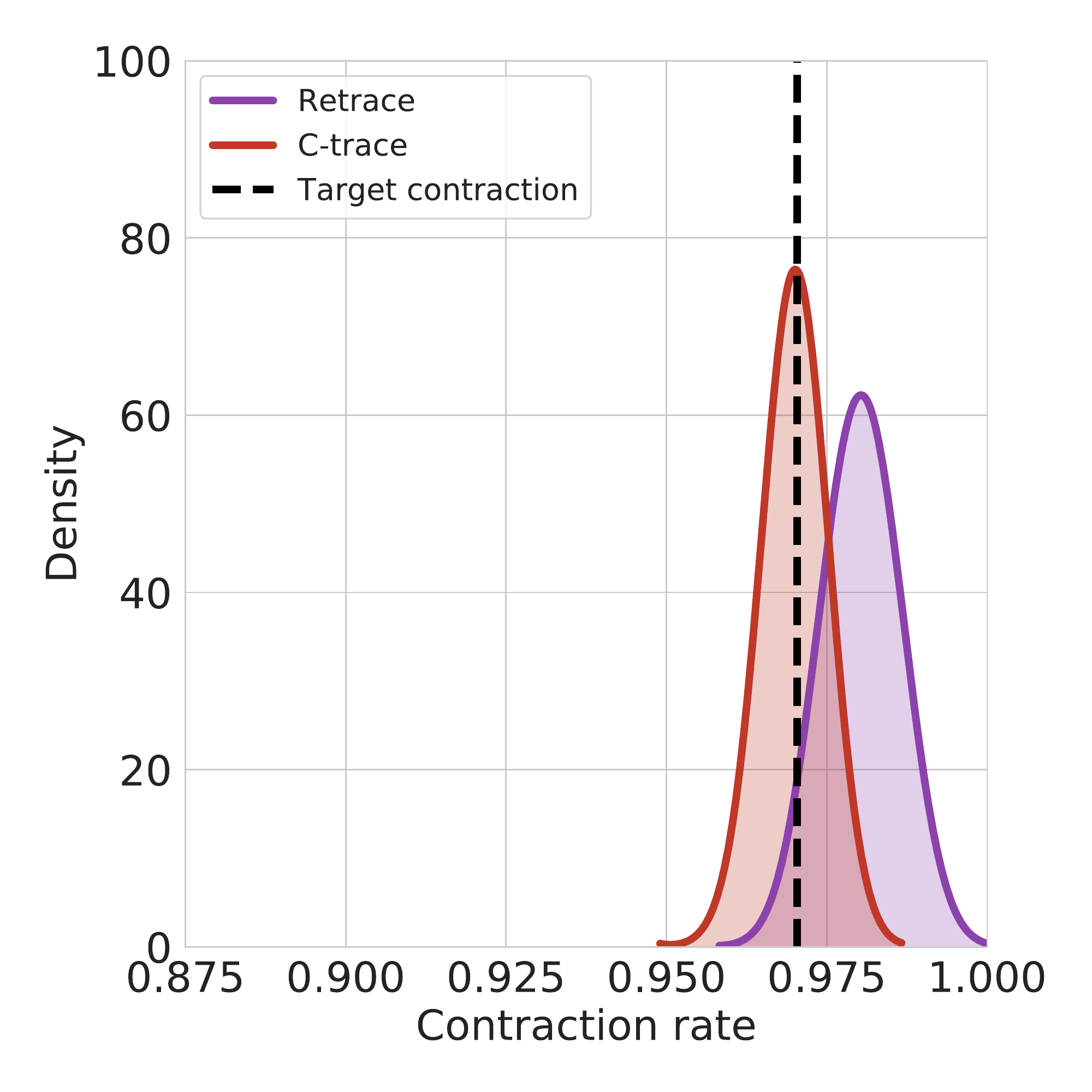}
    \hfill
    \includegraphics[keepaspectratio,width=.35\textwidth]{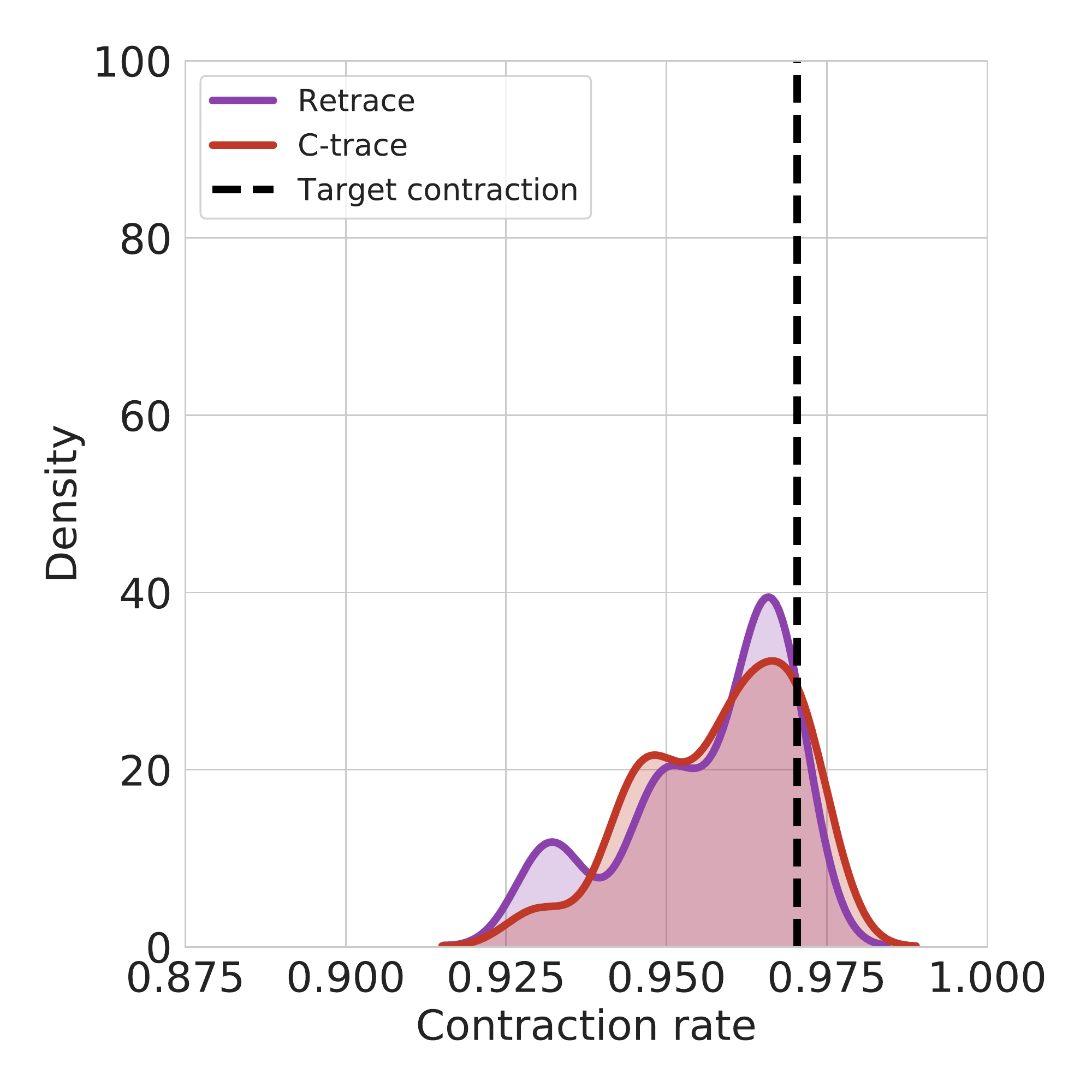}
    \hfill
    \null
    \caption{Contraction rates achieved by Retrace and C-trace at the end of training, across two classes of games. Left: games in which the contraction rate of Retrace is greater than the C-trace target of $\gamma^{10}$. Right: games for which the contraction rate of Retrace is less than the C-trace target of $\gamma^{10}$.}
    \label{fig:contraction_dists}
\end{figure}

\begin{figure}
    \centering
    \includegraphics[keepaspectratio, height=.95\textheight]{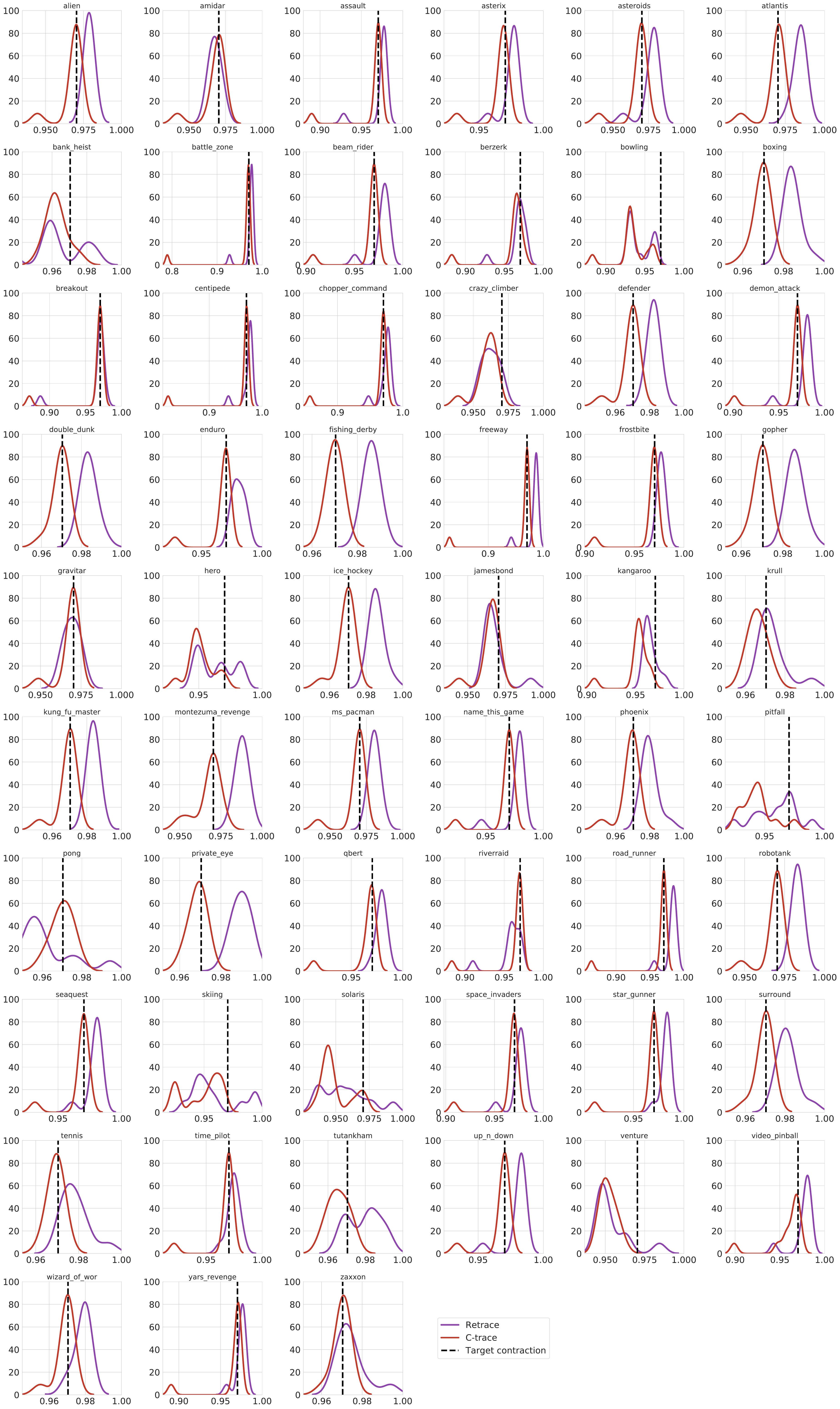}
    \caption{Per-game contraction rates for Retrace and C-trace throughout training.}
    \label{fig:per_game_contraction}
\end{figure}

\end{document}